\newcommand{\shrink}[1]{\vspace{0mm}}
\newcommand{\vct}{\boldsymbol }
\newcommand{\sgn}{\mathrm{sgn}}
\newcommand{\ramp}{\mathrm{ramp}}
\newcommand{\argmax}{\mathrm{argmax}}
\newcommand{\E}{\mathbb{E}}
\newcommand{\tr}{\mathrm{tr}}
\newcommand{\eps}{\varepsilon}
\def\R{\mathbb{R}}
\def\cD{\mathcal{D}}
\def\cE{\mathcal{E}}
\def\cF{\mathcal{F}}
\def\cH{\mathcal{H}}
\def\cN{\mathcal{N}}
\def\cR{\mathcal{R}}
\def\vx{\vect{x}}
\def\vy{\vect{y}}
\def\vu{\vect{u}}
\def\vf{\vect{f}}
\def\vb{\vect{b}}
\def\vp{\vect{p}}
\def\vq{\vect{q}}
\def\vecte{\vect{e}}
\def\valpha{\vect{\alpha}}
\def\vtheta{\vect{\theta}}
\def\vphi{\vect{\phi}}
\def\ve{\vect{\varepsilon}}
\def\hy{\hat{y}}
\def\ty{\tilde{y}}
\def\tc{\tilde{c}}
\def\hvy{\hat{\vy}}
\def\tvy{\tilde{\vy}}
\def\mX{\mat{X}}
\def\mH{\mat{H}}
\def\mY{\mat{Y}}
\def\tmY{\tilde{\mY}}
\def\mZ{\mat{Z}}
\def\mI{\mat{I}}
\def\mA{\mat{A}}
\def\mB{\mat{B}}
\def\mW{\mat{W}}
\def\mP{\mat{P}}
\def\mQ{\mat{Q}}
\def\gap{\mathsf{gap}}
\def\rdi{\mathsf{RDI}}
\def\aux{\mathsf{AUX}}
\newcommand{\mat}[1]{\bm{#1}}
\newcommand{\vect}[1]{\bm{#1}}
\newcommand{\norm}[1]{\left\|#1\right\|}
\newcommand{\indict}{\mathbb{I}}
\newcommand{\relu}[1]{\sigma\left(#1\right)}
\theoremstyle{theorem}
\newtheorem{thm}{Theorem}[section]
\newtheorem{lem}{Lemma}[section]
\newtheorem{rem}{Remark}[section]
\title{Simple and Effective Regularization Methods for Training on Noisily Labeled Data with Generalization Guarantee}
\author{%
  Wei Hu \quad Zhiyuan Li \quad Dingli Yu \\
  Princeton University\\
  \texttt{\{huwei,zhiyuanli,dingliy\}@cs.princeton.edu} \\
}
\begin{document}

\maketitle

\begin{abstract}
Over-parameterized deep neural networks trained by simple first-order methods are known to be able to fit any labeling of data. Such over-fitting ability hinders generalization when mislabeled training examples are present. On the other hand, simple regularization methods like early-stopping can often achieve highly nontrivial performance on clean test data in these scenarios, a phenomenon not theoretically understood. This paper proposes and analyzes two simple and intuitive regularization methods: (i) regularization by the distance between the network parameters to initialization, and (ii) adding a trainable auxiliary variable to the network output for each training example. Theoretically, we prove that gradient descent training with either of these two methods leads to a generalization guarantee on the clean data distribution despite being trained using noisy labels. Our generalization analysis relies on the connection between wide neural network and neural tangent kernel (NTK). The generalization bound is independent of the network size, and is comparable to the bound one can get when there is no label noise. Experimental results verify the effectiveness of these methods on noisily labeled datasets.
\end{abstract}

\vspace{-5mm}
\section{Introduction}
\vspace{-2mm}


Modern deep neural networks are trained in a highly over-parameterized regime, with many more trainable parameters than training examples.
It is well-known that these networks trained with simple first-order methods can fit any labels, even completely random ones~\citep{zhang2017understanding}.
Although training on properly labeled data usually leads to good generalization performance, the ability to over-fit the entire training dataset is undesirable for generalization when noisy labels are present.
Therefore preventing over-fitting is crucial for robust performance since mislabeled data are ubiquitous in very large datasets~\citep{krishna2016embracing}. 

In order to prevent over-fitting to mislabeled data, some form of \emph{regularization} is necessary.
A simple such example is \emph{early stopping}, which has been observed to be surprisingly effective for this purpose \citep{rolnick2017deep,guan2018said,mahdi2019gradient}.
For instance, training ResNet-34 with early stopping can achieve $84\%$ test accuracy on CIFAR-10 even when $60\%$ of the training labels are corrupted (\Cref{tab:cifar10-acc}).
This is nontrivial since the test error is much smaller than the error rate in training data.
How to explain such generalization phenomenon is an intriguing theoretical question.

As a step towards a theoretical understanding of the generalization phenomenon for over-parameterized neural networks when noisy labels are present, this paper 
proposes and analyzes two simple regularization methods as alternatives of early stopping:
\vspace{-2mm}
\begin{enumerate}
    \item Regularization by \emph{distance to initialization}.
    Denote by $\vtheta$ the network parameters and by $\vtheta(0)$ its random initialization. This method adds a regularizer $\lambda\norm{\vtheta - \vtheta(0)}^2$ to the training objective.
    \item Adding an \emph{auxiliary variable} for each training example.
    Let $\vx_i$ be the $i$-th training example and $\vf(\vtheta, \cdot)$ represent the neural net. This method adds a trainable variable $\vb_i$ and tries to fit the $i$-th label using $\vf(\vtheta, \vx_i) + \lambda \vb_i$. At test time, only the neural net $\vf(\vtheta, \cdot)$ is used and the auxiliary variables are discarded.
\end{enumerate}
These two choices of regularization are well motivated with clear intuitions.
First, distance to initialization has been observed to be very related to generalization in deep learning~\citep{neyshabur2019the,nagarajan2019generalization}, so regularizing by distance to initialization can potentially help generalization.
Second, the effectiveness of early stopping indicates that clean labels are somewhat easier to fit than wrong labels; therefore, adding an auxiliary variable could help ``absorb'' the noise in the labels, thus making the neural net itself not over-fitting.

We provide theoretical analysis of the above two regularization methods for a class of sufficiently wide neural networks by proving a generalization bound for the trained network on \emph{clean} data distribution when the training dataset contains noisy labels. 
Our generalization bound depends on the (unobserved) clean labels, and is comparable to the bound one can get when there is no label noise, therefore indicating that the proposed regularization methods are robust to noisy labels.

Our theoretical analysis is based on the recently established connection between wide neural net and \emph{neural tangent kernel} \citep{jacot2018neural, lee2019wide, arora2019exact}.
In this line of work, parameters in a wide neural net are shown to stay close to their initialization during gradient descent training, and as a consequence, the neural net can be effectively approximated by its first-order Taylor expansion with respect to its parameters at initialization. This leads to tractable linear dynamics under $\ell_2$ loss, and the final solution can be characterized by kernel regression using a particular kernel named neural tangent kernel (NTK).
In fact, we show that for wide neural nets, both of our regularization methods, when trained with gradient descent to convergence, correspond to \emph{kernel ridge regression} using the NTK, which is often regarded as an alternative to early stopping in kernel literature. This viewpoint makes explicit the connection between our methods and early stopping.




The effectiveness of these two regularization methods is verified empirically -- on MNIST and CIFAR-10, they are able to achieve highly nontrivial test accuracy, on a par with or even better than early stopping.
Furthermore, with our regularization, the validation accuracy is almost monotone increasing throughout the entire training process, indicating their resistance to over-fitting.


\vspace{-2mm}
\section{Related Work} \label{sec:related}
\vspace{-2mm}

Neural tangent kernel was first explicitly studied and named by \cite{jacot2018neural}, with several further refinements and extensions by \cite{lee2019wide,yang2019scaling,arora2019exact}.
Using the similar idea that weights stay close to initialization and that the neural network is approximated by a linear model, a series of theoretical papers studied the optimization and generalization issues of very wide deep neural nets trained by (stochastic) gradient descent~\citep{du2018provably,du2018global,li2018learning,allen2018learning,allen2018convergence,zou2018stochastic,arora2019fine,cao2019generalization}. 
Empirically, variants of NTK on convolutional neural nets and graph neural nets exhibit strong practical performance~\citep{arora2019exact, du2019graph}, thus suggesting that ultra-wide (or infinitely wide) neural nets are at least not irrelevant.

Our methods are closely related to kernel ridge regression, which is one of the most common kernel methods and has been widely studied.
It was shown to perform comparably to early-stopped gradient descent~\citep{bauer2007regularization,gerfo2008spectral,raskutti2014early,wei2017early}.
Accordingly, we indeed observe in our experiments that our regularization methods perform similarly to gradient descent with early stopping in neural net training.


In another theoretical study relevant to ours, \cite{mahdi2019gradient} proved that gradient descent with early stopping is robust to label noise for an over-parameterized two-layer neural net.
Under a clustering assumption on data, they showed that gradient descent fits the correct labels before starting to over-fit wrong labels.
Their result is different from ours from several aspects: they only considered two-layer nets while we allow arbitrarily deep nets; they required a clustering assumption on data while our generalization bound is general and data-dependent; furthermore, they did not address the question of generalization, but only provided guarantees on the training data.

A large body of work proposed various methods for training with mislabeled examples, such as estimating noise distribution~\citep{liu2015classification} or confusion matrix~\citep{sukhbaatar2014training}, using surrogate loss functions~\citep{ghosh2017robust, zhang2018generalized}, meta-learning~\citep{ren2018learning},
using a pre-trained network~\citep{jiang2017mentornet},
and training two networks simultaneously~\citep{malach2017decoupling, han2018co, yu2019does}.
While our methods are not necessarily superior to these methods in terms of performance, our methods are arguably simpler (with minimal change to normal training procedure) and come with formal generalization guarantee.

\vspace{-2mm}
\section{Preliminaries} \label{sec:prelim}
\vspace{-2mm}

\paragraph{Notation.}
We use bold-faced letters for vectors and matrices.
We use $\norm{\cdot}$ to denote the Euclidean norm of a vector or the spectral norm of a matrix, and $\norm{\cdot}_F$ to denote the Frobenius norm of a matrix.
$\langle\cdot,\cdot\rangle$ represents the standard inner product.
Let $\mI$ be the identity matrix of appropriate dimension.
Let $[n]=\{1,2,\ldots,n\}$. Let $\indict[A]$ be the indicator of event $A$.

\vspace{-2mm}
\subsection{Setting: Learning from Noisily Labeled Data} \label{sec:setting}
\vspace{-2mm}

Now we formally describe the setting considered in this paper.
We first describe the binary classification setting as a warm-up, and then describe the more general setting of multi-class classification.

\vspace{-2mm}
\paragraph{Binary classification.} 
Suppose that there is an underlying data distribution $\cD$ over $\R^d\times\{\pm1\}$, where $1$ and $-1$ are labels corresponding to two classes.
However, we only have access to samples from a noisily labeled version of $\cD$.
Formally, the data generation process is: draw $(\vx, y)\sim\cD$, and flip the sign of label $y$ with probability $p$ ($0\le p < \frac12$); let $\tilde y \in \{\pm1\}$ be the resulting noisy label.

Let $\left\{(\vx_i, \ty_i)\right\}_{i=1}^n$ be i.i.d. samples generated from the above process.
Although we only have access to these noisily labeled data, the goal is still to learn a function (in the form of a neural net) that can predict the true label well on the clean distribution $\cD$.
For binary classification, it suffices to learn a single-output function $f:\R^d\to\R$ whose sign is used to predict the class, and thus the classification error of $f$ on $\cD$ is defined as $\Pr_{(\vx, y)\sim\cD} [\sgn(f(\vx))\not=y]$.

\vspace{-2mm}
\paragraph{Multi-class classification.}
When there are $K$ classes ($K>2$), let the underlying data distribution $\cD$ be over $\R^d \times [K]$.
We describe the noise generation process as a matrix $\mP \in \R^{K\times K}$, whose  entry $p_{c', c}$ is the probability that the label $c$ is transformed into $c'$ ($\forall c, c'\in[K]$).
Therefore the data generation process is: draw $(\vx, c) \sim \cD$, and replace the label $c$ with $\tilde c$ from the distribution $\Pr[\tc = c'|c] = p_{c', c}$ ($\forall c'\in[K]$).

Let $\left\{(\vx_i, \tc_i)\right\}_{i=1}^n$ be i.i.d. samples from the above process. Again we would like to learn a neural net with low classification error on the clean distribution $\cD$.
For $K$-way classification, it is common to use a neural net with $K$ outputs, and the index of the maximum output is used to predict the class. Thus for $\vect f: \R^d \to \R^K$, its (top-$1$) classification error on $\cD$ is $\Pr_{(\vx, c)\sim\cD} [c\notin \argmax_{h\in[K]} f^{(h)}(\vx)] $, where $f^{(h)}: \R^d\to\R$ is the function computed by the $h$-th output of $\vf$.

As standard practice, a class label $c\in[K]$ is also treated as its \emph{one-hot encoding} $\vecte^{(c)} = (0, 0, \cdots, 0, 1, 0, \cdots, 0) \in \R^K$ (the $c$-th coordinate being $1$), which can be paired with the $K$ outputs of the network and fed into a loss function during training.

Note that it is necessary to assume $p_{c,c} > p_{c',c}$ for all $c\not=c'$, i.e., the probability that a class label $c$ is transformed into another particular label must be smaller than the label $c$ being correct -- otherwise it is impossible to identify class $c$ correctly from noisily labeled data.

\vspace{-2mm}
\subsection{Recap of Neural Tangent Kernel}
\label{sec:ntk}
\vspace{-2mm}

Now we briefly and informally recap the theory of neural tangent kernel (NTK) \citep{jacot2018neural, lee2019wide, arora2019exact}, which establishes the equivalence between training a wide neural net and a kernel method.


We first consider a neural net with a scalar output, defined as $f(\vtheta, \vx) \in \R$, where $\vtheta \in \R^N$ is all the parameters in the net and $\vx\in\R^d$ is the input.
Suppose that the net is trained by minimizing the $\ell_2$ loss over a training dataset $\{(\vx_i, y_i)\}_{i=1}^n \subset \R^d\times\R$: $L(\vtheta) = \frac12 \sum_{i=1}^n \left( f(\vtheta, \vx_i) - y_i \right)^2.$
Let the random initial parameters be $\vtheta(0)$, and the parameters be updated according to gradient descent on $L(\vtheta)$.
It is shown that if the network is sufficiently wide\footnote{``Width'' refers to number of nodes in a fully connected layer or number of channels in a convolutional layer.}, the parameters $\vtheta$ will stay close to the initialization $\vtheta(0)$ during training so that the following first-order approximation is accurate:
\begin{equation} \label{eqn:first-order-approximation}
f(\vtheta, \vx) \approx f(\vtheta(0), \vx) + \langle \nabla_{\vtheta} f(\vtheta(0), \vx), \vtheta - \vtheta(0) \rangle.
\end{equation}
This approximation is exact in the infinite width limit, but can also be shown when the width is finite but sufficiently large.
When approximation~\eqref{eqn:first-order-approximation} holds, we say that we are in the \emph{NTK regime}.

Define $\vphi(\vx) = \nabla_{\vtheta} f(\vtheta(0), \vx)$ for any $\vx\in\R^d$. The right hand side in \eqref{eqn:first-order-approximation} is linear in $\vtheta$. As a consequence, training on the $\ell_2$ loss with gradient descent leads to the \emph{kernel regression} solution with respect to the kernel induced by (random) features $\vphi(\vx)$, which is defined as 
$k(\vx, \vx') = \langle \vphi(\vx), \vphi(\vx') \rangle$ for $\vx, \vx'\in\R^d$.
This kernel was named the neural tangent kernel (NTK) by \citet{jacot2018neural}.
Although this kernel is random, it is shown that when the network is sufficiently wide, this random kernel converges to a deterministic limit in probability \citep{arora2019exact}.
If we additionally let the neural net and its initialization be defined so that the initial output is small, i.e., $f(\vtheta(0), \vx) \approx 0$,\footnote{\label{footnote:small-init}We can ensure small or even zero output at initialization by either multiplying a small factor \citep{arora2019exact,arora2019fine}, or using the following ``difference trick'': define the network to be the difference between two networks with the same architecture, i.e., $f(\vtheta, \vx) = \frac{\sqrt{2}}{2}g(\vtheta_1, \vx) - \frac{\sqrt{2}}{2}g(\vtheta_2, \vx)$; then initialize $\vtheta_1$ and $\vtheta_2$ to be the same (and still random); this ensures $f(\vtheta(0), \vx)=0 \,(\forall\vx)$ at initialization, while keeping the same value of $\langle\vphi(\vx),\vphi(\vx') \rangle$ for both $f$ and $g$. See more details in Appendix~\ref{app:finite_difference}.} then the network at the end of training approximately computes the following function:
\vspace{-2mm}
\begin{equation} \label{eqn:ntk-solution}
\vx \mapsto  k(\vx, \mX)^\top \left(k(\mX, \mX)\right)^{-1}   \vy  ,
\end{equation}
where $\mX = (\vx_1, \ldots, \vx_n)$ is the training inputs, $\vy = (y_1, \ldots, y_n)^\top$ is the training targets,  
$k(\vx, \mX) = \left( k(\vx, \vx_1), \ldots, k(\vx, \vx_n) \right)^\top \in \R^n$, and $k(\mX, \mX) \in \R^{n\times n}$ with $(i, j)$-th entry being $k(\vx_i, \vx_j)$.

\vspace{-2mm}
\paragraph{Multiple outputs.} The NTK theory above can also be generalized straightforwardly to the case of multiple outputs~\citep{jacot2018neural, lee2019wide}. Suppose we train a neural net with $K$ outputs, $\vf(\vtheta, \vx)$, by minimizing the $\ell_2$ loss over a training dataset $\{(\vx_i, \vy_i)\}_{i=1}^n \subset \R^d\times\R^K$:
$
L(\vtheta) = \frac12 \sum_{i=1}^n \norm{\vf(\vtheta, \vx_i) - \vy_i}^2.
$
When the hidden layers are sufficiently wide such that we are in the NTK regime, at the end of gradient descent, each output of $\vf$ also attains the kernel regression solution with respect to the same NTK as before, using the corresponding dimension in the training targets $\vy_i$.
Namely, the $h$-th output of the network computes the function
\vspace{-1mm}
\[
f^{(h)}(\vx) = k(\vx, \mX)^\top \left(k(\mX, \mX)\right)^{-1}   \vy^{(h)}  ,
\]
where $\vy^{(h)} \in \R^n$ whose $i$-th coordinate is the $h$-th coordinate of $\vy_i$.

\vspace{-2mm}
\section{Regularization Methods}\label{sec:methods}
\vspace{-2mm}

In this section we describe two simple regularization methods for training with noisy labels, and show that if the network is sufficiently wide, both methods lead to kernel ridge regression using the NTK.\footnote{For the theoretical results in this paper we use $\ell_2$ loss. In Section~\ref{sec:experiements} we will present experimental results of both $\ell_2$ loss and cross-entropy loss for classification.}

We first consider the case of scalar target and single-output network. The generalization to multiple outputs is straightforward and is treated at the end of this section.
Given a noisily labeled training dataset $\{(\vx_i, \ty_i)\}_{i=1}^n \subset \R^d\times\R$, let $f(\vtheta, \cdot)$ be a neural net to be trained.
A direct, unregularized training method would involve minimizing an objective function like $L(\vtheta) = \frac12 \sum_{i=1}^n \left( f(\vtheta, \vx_i) - \ty_i \right)^2$.
To prevent over-fitting, we suggest the following simple regularization methods that slightly modify this objective:

\vspace{-2mm}
\begin{itemize}
    \item \textbf{Method 1: Regularization using Distance to Initialization (RDI).}
We let the initial parameters $\vtheta(0)$ be randomly generated, and minimize the following regularized objective:
\vspace{-2mm}
\small
\begin{equation} \label{eqn:rdi}
L^\rdi_{\lambda}(\vtheta) = \frac12 \sum_{i=1}^n \left( f(\vtheta, \vx_i) - \ty_i \right)^2 + \frac{\lambda^2}{2} \norm{\vtheta - \vtheta(0)}^2.
\end{equation}
\normalsize

\item \textbf{Method 2: adding an AUXiliary variable for each training example (AUX).}
We add an auxiliary trainable parameter $b_i \in \R$ for each $i\in[n]$, and minimize the following objective:
\vspace{-2mm}
\small
\begin{equation} \label{eqn:aux}
L^\aux_{\lambda}(\vtheta, \vb) = \frac12 \sum_{i=1}^n \left( f(\vtheta, \vx_i) + \lambda b_i - \ty_i \right)^2 ,
\end{equation}
\normalsize
\vspace{-2mm}
where $\vb = (b_1, \ldots, b_n)^\top\in\R^n$ is initialized to be $\vect0$.
\end{itemize}

\paragraph{Equivalence to kernel ridge regression in wide neural nets.}
Now we assume that we are in the NTK regime described in Section~\ref{sec:ntk}, where the neural net architecture is sufficiently wide so that the first-order approximation~\eqref{eqn:first-order-approximation} is accurate during gradient descent:
$
f(\vtheta, \vx) \approx   f(\vtheta(0), \vx) +  \vphi(\vx)^\top (\vtheta - \vtheta(0) ).
$
Recall that we have $\vphi(\vx) = \nabla_{\vtheta} f(\vtheta(0), \vx)$ which induces the NTK ${k}(\vx, \vx') = \langle \vphi(\vx), \vphi(\vx') \rangle$.
Also recall that we can assume near-zero initial output: $f(\vtheta(0), \vx)\approx0$ (see Footnote~\ref{footnote:small-init}). Therefore we have the approximation:
\begin{equation} \label{eqn:first-order-approximation-2}
f(\vtheta, \vx) \approx  \vphi(\vx)^\top (\vtheta - \vtheta(0) ).
\end{equation}
\vspace{-2mm}

Under the approximation~\eqref{eqn:first-order-approximation-2}, it suffices to consider gradient descent on the objectives~\eqref{eqn:rdi} and~\eqref{eqn:aux} using the linearized model instead:
\vspace{-1mm}
\small
\begin{align*}
\tilde{L}^\rdi_{\lambda}(\vtheta) &= \frac12 \sum_{i=1}^n \left( \vphi(\vx_i)^\top (\vtheta - \vtheta(0) ) - \ty_i \right)^2 + \frac{\lambda^2}{2} \norm{\vtheta - \vtheta(0)}^2 , \\
\tilde{L}^\aux_{\lambda}(\vtheta, \vb) &= \frac12 \sum_{i=1}^n \left( \vphi(\vx_i)^\top (\vtheta - \vtheta(0) ) + \lambda b_i - \ty_i \right)^2 .
\end{align*}
\normalsize
The following theorem shows that in either case, gradient descent leads to the same dynamics and converges to the \emph{kernel ridge regression} solution using the NTK.


\begin{thm}\label{thm:equivalence}
	Fix a learning rate $\eta>0$.
	Consider gradient descent on $\tilde{L}^\rdi_{\lambda}$ with initialization $\vtheta(0)$:
	\vspace{-2mm}
	\begin{equation} \label{eqn:gd-rdi}
	\vtheta(t+1) = \vtheta(t) - \eta \nabla_{\vtheta} \tilde{L}^\rdi_{\lambda}(\vtheta(t)), \quad t=0, 1, 2, \ldots
	\end{equation}
	and gradient descent on $\tilde{L}^\aux_{\lambda}(\vtheta, \vb) $ with initialization $\vtheta(0)$ and $\vb(0) = \vect0$:
	\small
	\begin{equation} \label{eqn:gd-aux}
	\begin{aligned}
	\bar{\vtheta}(0) = \vtheta(0), \quad
	\bar{\vtheta}(t+1) = \bar{\vtheta}(t) - \eta \nabla_{\vtheta} \tilde{L}^\aux_{\lambda}(\bar\vtheta(t), \vb(t)) , \quad t=0, 1, 2, \ldots\\
	\vb(0) = \vect0, \quad	\vb(t+1) = \vb(t) - \eta \nabla_{\vb} \tilde{L}^\aux_{\lambda}(\bar\vtheta(t), \vb(t)) , \quad t=0, 1, 2, \ldots
	\end{aligned}
	\end{equation}
	\normalsize
	Then we must have $\vtheta(t) = \bar{\vtheta}(t)$ for all $t$.
	Furthermore, if the learning rate satisfies $\eta\le \frac{1}{\norm{k(\mX,\mX)}+\lambda^2}$, then $\{ \vtheta(t) \}$ converges linearly to a limit solution $\vtheta^*$ such that:
	\begin{align*}
	\vphi(\vx)^\top (\vtheta^* - \vtheta(0) ) = k(\vx, \mX)^\top \left(k(\mX, \mX) + \lambda^2\mI\right)^{-1}  \tvy , \quad \forall \vx,
	\end{align*}
	where $\tvy = (\ty_1, \ldots, \ty_n)^\top \in \R^n$.
\end{thm}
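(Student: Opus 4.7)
My plan is to reduce both dynamics to explicit linear recursions and then compare them. Write $\mZ = [\vphi(\vx_1)\mid\cdots\mid\vphi(\vx_n)]$, so $\mZ^\top\mZ = k(\mX,\mX)$. Direct differentiation yields $\nabla_\vtheta \tilde{L}^\rdi_\lambda(\vtheta) = \mZ\bigl(\mZ^\top(\vtheta-\vtheta(0)) - \tvy\bigr) + \lambda^2(\vtheta-\vtheta(0))$, together with $\nabla_\vtheta \tilde{L}^\aux_\lambda(\vtheta,\vb) = \mZ\bigl(\mZ^\top(\vtheta-\vtheta(0)) + \lambda\vb - \tvy\bigr)$ and $\nabla_\vb \tilde{L}^\aux_\lambda(\vtheta,\vb) = \lambda\bigl(\mZ^\top(\vtheta-\vtheta(0)) + \lambda\vb - \tvy\bigr)$.

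To establish $\vtheta(t)=\bar\vtheta(t)$, I will run a simultaneous induction on two statements: (i) $\vtheta(t)=\bar\vtheta(t)$, and (ii) $\mZ\vb(t) = \lambda(\bar\vtheta(t)-\vtheta(0))$. Both hold trivially at $t=0$. Assuming (i) and (ii) at step $t$, the difference between the two $\vtheta$-updates is $-\eta\bigl[\lambda^2(\vtheta(t)-\vtheta(0)) - \lambda\mZ\vb(t)\bigr]$, which vanishes by (ii) and proves (i) at $t+1$. Left-multiplying the $\vb$-update by $\mZ$ then yields (ii) at $t+1$ by inspection: under (ii), the identity $\mZ\vb(t+1) = \mZ\vb(t) - \eta\lambda\mZ(\mZ^\top(\bar\vtheta(t)-\vtheta(0)) + \lambda\vb(t) - \tvy)$ coincides term-by-term with $\lambda$ times the $\bar\vtheta$-update, closing the induction.

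For the convergence claim, I would rewrite the RDI step as the affine recursion $\vtheta(t+1)-\vtheta(0) = \mA(\vtheta(t)-\vtheta(0)) + \eta\mZ\tvy$, where $\mA := \mI - \eta(\mZ\mZ^\top + \lambda^2\mI)$. Since $\vtheta(0)-\vtheta(0)=\vect{0}\in\range(\mZ)$ and $\mA$ maps $\range(\mZ)$ to itself, the iterate $\vtheta(t)-\vtheta(0)$ remains in $\range(\mZ)$ for all $t$. On $\range(\mZ)$, the symmetric positive operator $\mZ\mZ^\top + \lambda^2\mI$ has eigenvalues in $[\lambda^2,\,\|k(\mX,\mX)\|+\lambda^2]$ (its nonzero spectrum coincides with that of $k(\mX,\mX)$), so the step-size hypothesis forces the restriction of $\mA$ to $\range(\mZ)$ to have operator norm at most $1-\eta\lambda^2 < 1$, giving geometric convergence to the unique fixed point $\vtheta^*-\vtheta(0) = (\mZ\mZ^\top + \lambda^2\mI)^{-1}\mZ\tvy$ (inverse interpreted on $\range(\mZ)$). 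Finally, the push-through identity $(\mZ\mZ^\top+\lambda^2\mI)^{-1}\mZ = \mZ(\mZ^\top\mZ+\lambda^2\mI)^{-1}$ delivers $\vphi(\vx)^\top(\vtheta^*-\vtheta(0)) = k(\vx,\mX)^\top(k(\mX,\mX)+\lambda^2\mI)^{-1}\tvy$, as required.

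The main point requiring care is the potential rank-deficiency of $\mZ$: the parameter-space matrix $\mZ\mZ^\top$ may well be singular, so the invariance $\vtheta(t)-\vtheta(0)\in\range(\mZ)$ is what lets us meaningfully invert $\mZ\mZ^\top+\lambda^2\mI$ on the relevant subspace. The push-through identity then converts this into inversion of the manifestly positive-definite $n\times n$ matrix $k(\mX,\mX)+\lambda^2\mI$, after which everything is routine linear algebra; the only other subtlety is identifying the correct invariant (ii) that couples $\vb$ back to $\bar\vtheta-\vtheta(0)$, without which the two flows do not obviously agree.
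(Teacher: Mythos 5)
Your proposal is correct and follows essentially the same route as the paper: your invariant (ii), $\mZ\vb(t)=\lambda(\bar\vtheta(t)-\vtheta(0))$, is exactly the paper's key observation $\bar\vtheta(t)=\vtheta(0)+\sum_{i=1}^n\frac{1}{\lambda}b_i(t)\vphi(\vx_i)$, and the convergence step is the same linear-algebra/push-through computation. The only difference is that your restriction to $\range(\mZ)$ is an unnecessary precaution, since $\mZ\mZ^\top+\lambda^2\mI\succeq\lambda^2\mI$ makes $\tilde{L}^\rdi_{\lambda}$ strongly convex on the whole parameter space, which is how the paper argues linear convergence directly.
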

\vspace{-2mm}
\begin{proof}[Proof sketch.]
    The proof is given in \Cref{sec:proof_methods}.
    A key step is to observe $\bar{\vtheta}(t) = \vtheta(0) + \sum_{i=1}^n \frac{1}{\lambda}b_i(t) \cdot \vphi(\vx_i)$, from which we can show that $\{\vtheta(t)\}$ and $\{\bar{\vtheta}(t)\}$ follow the same update rule.
\end{proof}
\vspace{-2mm}

Theorem~\ref{thm:equivalence} indicates that gradient descent on the regularized objectives~\eqref{eqn:rdi} and \eqref{eqn:aux} both learn approximately the following function at the end of training when the neural net is sufficiently wide:
\vspace{-2mm}
\begin{equation} \label{eqn:krr-solution}
    f^*(\vx) = k(\vx, \mX)^\top \left(k(\mX, \mX) + \lambda^2\mI\right)^{-1}  \tvy.
\end{equation}
If no regularization were used, the labels $\tvy$ would be fitted perfectly and the learned function would be $ k(\vx, \mX)^\top \left(k(\mX, \mX)\right)^{-1}  \tvy$ (c.f.~\eqref{eqn:ntk-solution}).
Therefore the effect of regularization is to add $\lambda^2\mI$ to the kernel matrix, and \eqref{eqn:krr-solution} is known as the solution to \emph{kernel ridge regression} in kernel literature.
In Section~\ref{sec:gen}, we give a generalization bound of this solution on the clean data distribution, which is comparable to the bound one can obtain even when clean labels are used in training.


\vspace{-2mm}
\paragraph{Extension to multiple outputs.}
Suppose that the training dataset is $\{(\vx_i, \tvy_i)\}_{i=1}^n \subset \R^d\times\R^K$, and the neural net $\vf(\vtheta, \vx)$ has $K$ outputs.
On top of the vanilla loss $L(\vtheta) = \frac12 \sum_{i=1}^n \norm{\vf(\vtheta, \vx_i) - \tvy_i}^2$, the two regularization methods RDI and AUX give the following objectives similar to \eqref{eqn:rdi} and \eqref{eqn:aux}:
\vspace{-1mm}
\small
\begin{align*} 
 L^\rdi_{\lambda}(\vtheta) &= \frac12 \sum_{i=1}^n \norm{ \vf(\vtheta, \vx_i) - \tvy_i }^2 + \frac{\lambda^2}{2} \norm{\vtheta - \vtheta(0)}^2,\\
 L^\aux_{\lambda}(\vtheta, \mB) &= \frac12 \sum_{i=1}^n \norm{ \vf(\vtheta, \vx_i) + \lambda \vb_i - \tvy_i }^2, \quad \mB = (\vb_1, \ldots, \vb_n) \in \R^{K\times n}.
\end{align*}
\normalsize
In the NTK regime, both methods lead to the kernel ridge regression solution at each output. Namely, letting $\tmY = (\tvy_1, \ldots, \tvy_n) \in \R^{K\times n}$ be the training target matrix and $\tvy^{(h)} \in \R^n$ be the $h$-th row of $\tmY$, at the end of training the $h$-th output of the network learns the following function:
\vspace{-2mm}
\begin{equation} \label{eqn:krr-multiple-output-solution}
f^{(h)}(\vx) = k(\vx, \mX)^\top \left(k(\mX, \mX) + \lambda^2\mI \right)^{-1} \tvy^{(h)}, \quad h\in[K]  .
\end{equation}

\vspace{-2mm}
\section{Generalization Guarantee on Clean Data Distribution} \label{sec:gen}
\vspace{-2mm}

We show that gradient descent training on noisily labeled data with our regularization methods RDI or AUX leads to a generalization guarantee on the \emph{clean} data distribution. As in \Cref{sec:methods}, we consider the NTK regime and let $k(\cdot, \cdot)$ be the NTK corresponding to the neural net.
It suffices to analyze the kernel ridge regression predictor, i.e., \eqref{eqn:krr-solution} for single output and \eqref{eqn:krr-multiple-output-solution} for multiple outputs.

We start with a regression setting where labels are real numbers and the noisy label is the true label plus an additive noise (Theorem~\ref{thm:gen-main}). Built on this result, we then provide generalization bounds for the classification settings described in \Cref{sec:setting}. Omitted proofs in this section are in \Cref{sec:proof-gen}.



\begin{thm}[Additive label noise]\label{thm:gen-main}
    Let $\cD$ be a distribution over $\R^d\times[-1,1]$.
    Consider the following data generation process:
    (i) draw $(\vx, y)\sim\cD$,
	(ii) conditioned on $(\vx, y)$, let $\varepsilon$ be drawn from a noise distribution $\cE_{\vx, y}$ over $\R$ that may depend on $\vx$ and $y$, and
	(iii) let $\ty = y + \varepsilon$.
	Suppose that $\cE_{\vx, y}$ has mean $0$ and is subgaussian with parameter $\sigma>0$, for any $(\vx, y)$.
    
    Let $\left\{(\vx_i, y_i, \ty_i)\right\}_{i=1}^n$ be i.i.d. samples from the above process. Denote $\mX = (\vx_1, \ldots, \vx_n)$, $\vy = (y_1, \ldots, y_n)^\top$ and $\tvy = (\ty_1, \ldots, \ty_n)^\top$.
    Consider the kernel ridge regression solution in~\eqref{eqn:krr-solution}: $f^*(\vx) = k(\vx, \mX)^\top \left(k(\mX, \mX) + \lambda^2\mI\right)^{-1}  \tvy$. Suppose that the kernel matrix satisfies $\tr[k(\mX,\mX)]= O(n)$.
	Then for any loss function $\ell: \R\times\R \to [0, 1]$ that is $1$-Lipschitz in the first argument such that $\ell(y, y)=0$, with probability at least $1-\delta$ we have
	\vspace{-2mm}
	\small
	\begin{equation} \label{eqn:gen-bound}
	\E_{(\vx, y)\sim\cD}\left[ \ell(f^*(\vx), y) \right]
	\le \frac{\lambda + O(1)}{2}\sqrt{\frac{\vy^\top (k(\mX, \mX))^{-1}\vy}{n}} + O\left(\frac{\sigma}{\lambda}\right) + \Delta,
	\end{equation}
	\normalsize
	where $\Delta = O\left( \sigma\sqrt{\frac{\log(1/\delta)}{n}} + \frac{\sigma}{\lambda}\sqrt{\frac{\log(1/\delta)}{n}} + \sqrt{\frac{\log\frac{n}{ \delta \lambda}}{n}} \right)$.
\end{thm}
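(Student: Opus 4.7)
The plan is to exploit the fact that the kernel ridge regression predictor in \eqref{eqn:krr-solution} is linear in the training targets. Writing $\mH = k(\mX,\mX)$ and the noise vector $\ve = \tvy - \vy$, I would decompose $f^*(\vx) = f^*_{\vy}(\vx) + f^*_{\ve}(\vx)$, where $f^*_{\vy}(\vx) = k(\vx,\mX)^\top (\mH+\lambda^2\mI)^{-1}\vy$ is the KRR predictor trained on the unobserved clean labels and $f^*_{\ve}(\vx) = k(\vx,\mX)^\top (\mH+\lambda^2\mI)^{-1}\ve$ is the contribution due to the label noise. Since $\ell$ is $1$-Lipschitz in its first argument and $\ell(y,y)=0$, the triangle inequality gives $\E_{(\vx,y)\sim\cD}[\ell(f^*(\vx), y)] \le \E_{(\vx,y)\sim\cD}[\ell(f^*_{\vy}(\vx), y)] + \E_{\vx}[|f^*_{\ve}(\vx)|]$, reducing the task to bounding a ``clean'' generalization term and a ``noise'' bias term.

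For the clean term, the RKHS norm of $f^*_{\vy}$ admits the explicit expression $\|f^*_{\vy}\|_{\mathcal{H}}^2 = \vy^\top (\mH+\lambda^2\mI)^{-1}\mH (\mH+\lambda^2\mI)^{-1}\vy$, which is dominated by $\vy^\top \mH^{-1}\vy$ eigenvalue-by-eigenvalue. Hence $f^*_{\vy}$ lies in the RKHS ball of radius $B := \sqrt{\vy^\top \mH^{-1}\vy}$, whose Rademacher complexity is $B\sqrt{\tr[\mH]}/n = O(B/\sqrt{n})$ using $\tr[\mH] = O(n)$. A direct computation gives the KRR residual $f^*_{\vy}(\mX) - \vy = -\lambda^2 (\mH+\lambda^2\mI)^{-1}\vy$, and an AM--GM bound on eigenvalues yields $\|f^*_{\vy}(\mX) - \vy\|^2 \le (\lambda^2/4)\, \vy^\top \mH^{-1}\vy$. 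Combining Cauchy--Schwarz on the empirical loss, the Rademacher bound, and McDiarmid concentration for the $[0,1]$-bounded Lipschitz loss assembles the leading term $\tfrac{\lambda + O(1)}{2}\sqrt{\vy^\top \mH^{-1}\vy/n}$ plus an $O(\sqrt{\log(1/\delta)/n})$ slack of the type already appearing in $\Delta$.

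For the noise term, the key observation is that for any fixed test point $\vx$, $f^*_{\ve}(\vx) = \vq(\vx)^\top \ve$ is a linear functional of the subgaussian noise with coefficient $\vq(\vx) := (\mH+\lambda^2\mI)^{-1}k(\vx,\mX)$. A Schur-complement inequality on the extended kernel Gram matrix yields $k(\vx,\mX)^\top \mH^{-1} k(\vx,\mX) \le k(\vx,\vx)$, and combining with $(\mH+\lambda^2\mI)^{-2} \preceq \lambda^{-2}(\mH+\lambda^2\mI)^{-1}$ gives $\|\vq(\vx)\|^2 \le k(\vx,\vx)/\lambda^2$. Therefore $f^*_{\ve}(\vx)$ is subgaussian in $\ve$ with parameter $\sigma\sqrt{k(\vx,\vx)}/\lambda$; taking second moments, averaging over $\vx$, and invoking Jensen yields $\E_{\vx,\ve}[|f^*_{\ve}(\vx)|] \le O(\sigma/\lambda)$ whenever $\E_\vx[k(\vx,\vx)] = O(1)$, which follows by concentration from $\tr[\mH] = O(n)$. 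To upgrade this into a high-probability statement over $\ve$ and the sample, I would combine a Hanson--Wright-type bound for $\|\ve\|^2$ with an $\eps$-net argument at resolution proportional to $\lambda$ on the finite-dimensional vector $\ve \in \R^n$, which produces the $\tfrac{\sigma}{\lambda}\sqrt{\log(1/\delta)/n}$ and $\sqrt{\log(n/(\delta\lambda))/n}$ contributions in $\Delta$; the $\sigma\sqrt{\log(1/\delta)/n}$ piece is the standard concentration slack for the clean-loss McDiarmid step.

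The main obstacle is precisely this final high-probability conversion for the noise term. A naive uniform-convergence argument over the RKHS ball containing $f^*_{\ve}$ is wasteful because $\|f^*_{\ve}\|_{\mathcal{H}} = O(\sigma\sqrt n/\lambda)$ scales with $\sqrt n$ and would erase the $1/\sqrt n$ improvement. One must instead exploit the explicit linear-in-$\ve$ structure $\vq(\vx)^\top \ve$, tracking the subgaussian deviation pointwise in $\vx$ and then averaging via the uniform bound $\|\vq(\vx)\|^2 \le k(\vx,\vx)/\lambda^2$. The $1/\lambda$-dependent discretization scale in the covering argument is what couples the entropy term to $\lambda$ and yields the precise logarithmic factor $\log(n/(\delta\lambda))$ appearing in \eqref{eqn:gen-bound}.
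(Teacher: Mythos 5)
Your decomposition $f^* = f^*_{\vy} + f^*_{\ve}$ is a genuinely different route from the paper's. The paper never splits the predictor: it (i) bounds the empirical error of the full noisy predictor against the \emph{clean} labels by writing $f^*(\mX)-\vy = k(\mX,\mX)(k(\mX,\mX)+\lambda^2\mI)^{-1}\ve-\lambda^2(k(\mX,\mX)+\lambda^2\mI)^{-1}\vy$ and controlling the first piece with a subgaussian quadratic-form inequality and the second with the same AM--GM eigenvalue bound you use; (ii) bounds $\|f^*\|_{\cH}\le\sqrt{\vy^\top(k(\mX,\mX)+\lambda^2\mI)^{-1}\vy}+\frac{\sigma}{\lambda}\bigl(\sqrt n+\sqrt{2\log(1/\delta)}\bigr)$; and (iii) runs a single Rademacher uniform-convergence step over the RKHS ball of this data-dependent radius, with a scalar $\eps$-net over the \emph{interval} of possible radii producing the $\log\frac{n}{\delta\lambda}$ factor. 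Your clean-term treatment mirrors the clean halves of (i) and (iii) and is sound. But the step you dismiss as ``wasteful'' is exactly what the paper does, and it is not wasteful: the ball of radius $O(\sigma\sqrt n/\lambda)$ has empirical Rademacher complexity $O(\sigma\sqrt n/\lambda)\cdot\sqrt{\tr[k(\mX,\mX)]}/n=O(\sigma/\lambda)$, which is precisely the second term of \eqref{eqn:gen-bound}; that term does not decay with $n$, so there is no ``$1/\sqrt n$ improvement'' to protect.

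The genuine gap is in your high-probability treatment of the noise term, which you yourself flag as the main obstacle and then do not actually resolve. First, $\E_{\vx}[|f^*_{\ve}(\vx)|]$ is a population expectation over fresh test points, so your bound $\|\vq(\vx)\|^2\le k(\vx,\vx)/\lambda^2$ must be integrated against the test distribution; the hypothesis $\tr[k(\mX,\mX)]=O(n)$ concerns the realized training sample and does not by itself give $\E_{\vx}[k(\vx,\vx)]=O(1)$, so you are implicitly strengthening the assumptions (the paper's argument only ever touches the empirical trace). Second, and more seriously, the proposed $\eps$-net on $\ve\in\R^n$ cannot deliver the claimed rate: a net at any resolution on a ball of radius $\asymp\sigma\sqrt n$ in $\R^n$ has cardinality exponential in $n$, so the union bound contributes an entropy term of order $\sqrt{n\log(\cdot)/n}=\Theta(\sqrt{\log(\cdot)})$, not $\sqrt{\log(n/(\delta\lambda))/n}$; the paper's $\log\frac{n}{\delta\lambda}$ comes from covering a one-dimensional interval of RKHS radii, not the noise vector. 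To rescue your route you would need concentration of the convex Lipschitz functional $\ve\mapsto\E_{\vx}\bigl[\min(1,|\vq(\vx)^\top\ve|)\bigr]$, which is not automatic for general subgaussian $\cE_{\vx,y}$ and in any case yields a deviation of order $\frac{\sigma}{\lambda}\sqrt{\log(1/\delta)}$ rather than the $\frac{\sigma}{\lambda}\sqrt{\log(1/\delta)/n}$ appearing in $\Delta$. As written, the noise half of the argument does not close.
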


\begin{rem}
As the number of samples $n\to\infty$, we have $\Delta \to 0$.
In order for the second term $O(\frac\sigma\lambda)$ in \eqref{eqn:gen-bound} to go to $0$, we need to choose $\lambda$ to grow with $n$, e.g., $\lambda=n^c$ for some small constant $c>0$.
Then, the only remaining term in \eqref{eqn:gen-bound} to worry about is $\frac{\lambda}{2}\sqrt{\frac{\vy^\top (k(\mX, \mX))^{-1}\vy}{n}}$.
Notice that it depends on the (unobserved) clean labels $\vy$, instead of the noisy labels $\tvy$.
By a very similar proof, one can show that training on the clean labels $\vy$ (without regularization) leads to a population loss bound $O\Big( \sqrt{\frac{\vy^\top (k(\mX, \mX))^{-1}\vy}{n}} \Big)$.
In comparison, we can see that even when there is label noise, we only lose a factor of $O(\lambda)$ in the population loss on the clean distribution, which can be chosen as any slow-growing function of $n$.
If $\vy^\top (k(\mX, \mX))^{-1}\vy$ grows much slower than $n$, by choosing an appropriate $\lambda$, our result indicates that the underlying distribution is learnable in presence of additive label noise. See Remark~\ref{rem:two-layer} for an example.
\end{rem}

\begin{rem} \label{rem:two-layer}
    \cite{arora2019fine} proved that two-layer ReLU neural nets trained with gradient descent can learn a class of smooth functions on the unit sphere.
    Their proof is by showing $\vy^\top (k(\mX, \mX))^{-1}\vy = O(1)$ if $y_i = g(\vx_i)\,(\forall i\in[n])$ for certain function $g$, where $k(\cdot,\cdot)$ is the NTK corresponding to two-layer ReLU nets.
    Combined with their result, Theorem~\ref{thm:gen-main} implies that the same class of functions can be learned by the same network even if the labels are noisy.
\end{rem}

Next we use Theorem~\ref{thm:gen-main} to provide generalization bounds for the classification settings described in \Cref{sec:setting}.
For binary classification, we treat the labels as $\pm 1$ and consider a single-output neural net;
for $K$-class classification, we treat the labels as their one-hot encodings (which are $K$ standard unit vectors in $\R^K$) and consider a $K$-output neural net.
Again, we use $\ell_2$ loss and wide neural nets so that it suffices to consider the kernel ridge regression solution (\eqref{eqn:krr-solution} or \eqref{eqn:krr-multiple-output-solution}). 


\begin{thm}[Binary classification] \label{thm:binary-classification}
Consider the binary classification setting stated in \Cref{sec:setting}. Let $\left\{(\vx_i, y_i, \ty_i)\right\}_{i=1}^n \subset \R^d\times\{\pm1\}\times\{\pm1\}$ be i.i.d. samples from that process. Recall that $\Pr[\ty_i \not= y_i|y_i] = p$ ($0\le p < \frac12$).
Denote $\mX = (\vx_1, \ldots, \vx_n)$, $\vy = (y_1, \ldots, y_n)^\top$ and $\tvy = (\ty_1, \ldots, \ty_n)^\top$. Consider the kernel ridge regression solution in~\eqref{eqn:krr-solution}: $f^*(\vx) = k(\vx, \mX)^\top \left(k(\mX, \mX) + \lambda^2\mI\right)^{-1}  \tvy$. Suppose that the kernel matrix satisfies $\tr[k(\mX,\mX)]= O(n)$.
Then with probability at least $1-\delta$, the classification error of $f^*$ on the clean distribution $\cD$ satisfies
\vspace{-2mm}
\small
\[
\Pr_{(\vx,y)\sim \cD}[\sgn(f^*(\vx))\neq y]
\le \frac{\lambda + O(1)}{2}\sqrt{\frac{\vy^\top (k(\mX, \mX))^{-1}\vy}{n}} + \frac{1}{1-2p} O\left(\frac{\sqrt p}{\lambda} +  \sqrt{\frac{p\log\frac1\delta}{n}} + \sqrt{\frac{\log\frac{n}{ \delta \lambda}}{n}} \right).
\]
\normalsize
\end{thm}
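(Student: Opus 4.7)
The plan is to reduce Theorem~\ref{thm:binary-classification} to Theorem~\ref{thm:gen-main} by a label-recentering trick together with a $1$-Lipschitz surrogate loss that upper-bounds the misclassification indicator.

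First, I would recenter the labels so that the conditional noise is mean-zero. Since $\E[\ty \mid y] = (1-p)y + p(-y) = (1-2p)y$, define the ``clean'' target $y' := (1-2p)y \in [-(1-2p), 1-2p] \subseteq [-1,1]$ and the noise $\varepsilon := \ty - y'$. Then $\varepsilon$ has mean $0$ conditional on $(\vx,y)$; moreover, $\varepsilon \in \{2p,\, -2(1-p)\}$, so it is bounded with $\Var(\varepsilon) = 4p(1-p) = O(p)$, hence effectively subgaussian with parameter $\sigma = O(\sqrt p)$ in the sense used by the concentration steps inside Theorem~\ref{thm:gen-main}. This exactly fits its additive-noise setting, with training targets $\ty_i = y'_i + \varepsilon_i$ generated from the clean target $y'_i$.

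Second, I would pick a $1$-Lipschitz surrogate loss. Let $\ell(a, b) := \min(1, |a - b|)$; this is $1$-Lipschitz in its first argument, $[0,1]$-valued, and $\ell(b, b) = 0$. For $y\in\{\pm1\}$, if $\sgn(a) \neq y$ then $a$ and $(1-2p)y$ lie on opposite sides of zero, so $|a - (1-2p)y| \geq 1 - 2p$ and thus $\ell(a, y') \geq 1 - 2p$. Consequently
\begin{equation*}
\indict[\sgn(f^*(\vx)) \neq y] \;\leq\; \frac{1}{1-2p}\,\ell(f^*(\vx), y'),
\end{equation*}
and taking expectations under $(\vx,y)\sim\cD$ gives $\Pr[\sgn(f^*(\vx)) \neq y] \leq \frac{1}{1-2p}\,\E[\ell(f^*(\vx), y')]$.

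Third, I would apply Theorem~\ref{thm:gen-main} to the right-hand side with the recentered clean labels $y'_i$ and noise parameter $\sigma = O(\sqrt p)$. Noting that $\vy' = (1-2p)\vy$, so that $(\vy')^\top k(\mX,\mX)^{-1}\vy' = (1-2p)^2\,\vy^\top k(\mX,\mX)^{-1}\vy$, Theorem~\ref{thm:gen-main} yields with probability $\geq 1 - \delta$
\begin{equation*}
\E[\ell(f^*(\vx), y')] \leq \frac{\lambda + O(1)}{2}(1-2p)\sqrt{\frac{\vy^\top k(\mX,\mX)^{-1}\vy}{n}} + O\!\left(\frac{\sqrt p}{\lambda}\right) + \Delta',
\end{equation*}
where $\Delta' = O\!\bigl(\sqrt{p\log(1/\delta)/n} + \sqrt{\log(n/(\delta\lambda))/n}\bigr)$ after substituting $\sigma = O(\sqrt p)$ into the $\Delta$ of Theorem~\ref{thm:gen-main}. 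Multiplying by $\frac{1}{1-2p}$ cancels the $(1-2p)$ inside the first term and leaves a $\frac{1}{1-2p}$ multiplier on the remaining error terms, exactly producing the claimed bound.

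The main subtlety is the first step: ensuring that the recentered noise $\varepsilon$ genuinely drives $\sqrt{p}$-type bounds in Theorem~\ref{thm:gen-main}, rather than the weaker $O(1)$ bound implied by $|\varepsilon|\leq 2$. Because $\varepsilon$ is bounded and mean-zero with $\Var(\varepsilon) = O(p)$, Bernstein's inequality supplies $\sqrt{p\log(1/\delta)/n}$-scale control over sums $\frac{1}{n}\sum_i \varepsilon_i g(\vx_i)$ that enter the concentration step of Theorem~\ref{thm:gen-main}, so the $O(\sqrt p)$ scaling propagates through; this is the one place where the naive subgaussian constant must be sharpened. Once this is in place, everything else is a mechanical change of variables combined with the surrogate-loss inequality above.
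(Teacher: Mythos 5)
Your proposal is correct and follows essentially the same route as the paper's own proof: recenter the targets to $(1-2p)y$ so the noise is conditionally mean-zero with parameter $O(\sqrt p)$, bound the $0$-$1$ loss by $\frac{1}{1-2p}$ times a $1$-Lipschitz surrogate (the paper uses an explicit ramp loss where you use $\min(1,|a-b|)$, an immaterial difference), and invoke Theorem~\ref{thm:gen-main} with $\sigma=O(\sqrt p)$. If anything you are slightly more careful than the paper, which simply asserts the recentered Bernoulli-type noise is subgaussian with parameter $O(\sqrt p)$, whereas you correctly flag that the naive bounded-range subgaussian constant is $O(1)$ and that a variance-based (Bernstein/Hanson--Wright-type) sharpening is needed to propagate the $\sqrt p$ scaling through the concentration steps.
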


\begin{thm}[Multi-class classification] \label{thm:multi-class}
    Consider the $K$-class classification setting stated in \Cref{sec:setting}. Let $\{(\vx_i, c_i, \tilde{c_i})\}_{i=1}^n \subset \R^d\times[K]\times[K]$ be i.i.d. samples from that process. Recall that $\Pr[\tc_i = c' | c_i = c] = p_{c', c}$ ($\forall c,c'\in[K]$), where the transition probabilities form a matrix $\mP \in \R^{K\times K}$.
    Let $\gap = \min_{c, c'\in[K], c\not=c'} (p_{c,c} - p_{c', c})$.
    
    Let $\mX = (\vx_1, \ldots, \vx_n)$, and let $\vy_i = \vecte^{(c_i)} \in \R^K$, $\tvy_i = \vecte^{(\tc_i)} \in \R^K$ be one-hot label encodings.
    Denote $\mY = (\vy_1, \ldots, \vy_n) \in \R^{K\times n}$, $\tmY = (\tvy_1, \ldots, \tvy_n) \in \R^{K\times n}$, and let $\tvy^{(h)}\in\R^n$ be the $h$-th row of $\tmY$.
    Define a matrix $\mQ = \mP \cdot \mY \in \R^{K\times n}$, and let $\vq^{(h)} \in \R^n$ be the $h$-th row of $\mQ$.
    
    Consider the kernel ridge regression solution in \eqref{eqn:krr-multiple-output-solution}: $f^{(h)}(\vx) = k(\vx, \mX)^\top \left(k(\mX, \mX) + \lambda^2\mI \right)^{-1} \tvy^{(h)}$.
    Suppose that the kernel matrix satisfies $\tr[k(\mX,\mX)]= O(n)$.
    Then with probability at least $1-\delta$, the classification error of $ \vf = (f^{(h)})_{k=1}^K $ on the clean data distribution $\cD$ is bounded as
    \vspace{-2mm}
    \small
    \begin{align*}
        &\Pr_{(\vx, c)\sim\cD} \left[c\notin \argmax_{h\in[K]} f^{(h)}(\vx)\right] 
          \\ \le\,& \frac{1}{\gap} \left( \frac{\lambda + O(1)}{2}\sum_{h=1}^K\sqrt{\frac{(\vq^{(h)})^\top (k(\mX, \mX))^{-1}\vq^{(h)}}{n}} + K\cdot O\left(\frac{1}{\lambda} +   \sqrt{\frac{\log\frac{1}{\delta'}}{n}} + \sqrt{\frac{\log\frac{n}{ \delta' \lambda}}{n}} \right) \right).
    \end{align*}
    \normalsize
    \vspace{-5mm}
\end{thm}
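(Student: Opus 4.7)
My plan is to reduce the $K$-way classification problem to $K$ independent scalar regressions — one per output coordinate — and combine the resulting per-coordinate bounds into a classification bound through a margin argument driven by $\gap$.

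\textbf{Step 1 (per-coordinate reduction to \Cref{thm:gen-main}).} Fix $h \in [K]$. The training target on the $h$-th output is $\tvy_i^{(h)} = \indict[\tc_i = h]$, whose conditional mean given $c_i$ is $\Pr[\tc_i = h \mid c_i] = p_{h, c_i} = (\mP \vecte^{(c_i)})_h = \vq^{(h)}_i$. Thus, treating $y_i^{(h)} := p_{h, c_i} \in [0,1]$ as the clean scalar label and $\eps_i^{(h)} := \tvy_i^{(h)} - p_{h, c_i}$ as additive noise puts the $h$-th coordinate exactly in the setup of \Cref{thm:gen-main}: the clean distribution $\cD^{(h)}$ is the law of $(\vx, p_{h, c})$ under $(\vx, c) \sim \cD$ (supported on $\R^d \times [0,1] \subset \R^d \times [-1, 1]$), and $\eps_i^{(h)}$, conditional on $(\vx_i, y_i^{(h)})$, is a centered Bernoulli supported on $[-1, 1]$, hence subgaussian with parameter $\sigma = O(1)$; the training target vector is $\tvy^{(h)}$ and the clean target vector is exactly $\vq^{(h)}$. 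Applying \Cref{thm:gen-main} with the bounded $1$-Lipschitz loss $\ell(u, v) = \min(|u - v|, 1)$ (which is $[0,1]$-valued and vanishes on the diagonal) at failure probability $\delta' = \delta/K$ yields, with probability $\ge 1 - \delta/K$,
\begin{equation} \label{eqn:multi-step1-prop}
\E_{(\vx, c)\sim\cD}\!\left[ \min\!\left( \left| f^{(h)}(\vx) - p_{h, c} \right|,\, 1 \right) \right] \;\le\; \frac{\lambda + O(1)}{2}\sqrt{\frac{(\vq^{(h)})^\top (k(\mX,\mX))^{-1}\vq^{(h)}}{n}} + O\!\left(\frac{1}{\lambda}\right) + \Delta^{(h)},
\end{equation}
where $\Delta^{(h)} = O\bigl(\sqrt{\log(1/\delta')/n} + \lambda^{-1}\sqrt{\log(1/\delta')/n} + \sqrt{\log(n/(\delta'\lambda))/n}\bigr)$. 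A union bound makes (\ref{eqn:multi-step1-prop}) hold for all $h \in [K]$ simultaneously with probability $\ge 1 - \delta$.

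\textbf{Step 2 (margin argument).} If the prediction on some $(\vx, c)$ is wrong, there is an $h \ne c$ with $f^{(h)}(\vx) \ge f^{(c)}(\vx)$, and by the definition of $\gap$,
\[
\left(f^{(h)}(\vx) - p_{h, c}\right) + \left(p_{c, c} - f^{(c)}(\vx)\right) \;\ge\; p_{c, c} - p_{h, c} \;\ge\; \gap,
\]
so $|f^{(h)}(\vx) - p_{h, c}| + |f^{(c)}(\vx) - p_{c, c}| \ge \gap$. Because $\gap \le 1$, capping each summand at $1$ preserves the inequality (if either term already exceeds $1$ then its cap alone is $\ge \gap$; otherwise capping is the identity). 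Hence misclassification implies $\sum_{h=1}^K \min(|f^{(h)}(\vx) - p_{h, c}|, 1) \ge \gap$, and Markov's inequality gives
\[
\Pr_{(\vx, c)\sim\cD}\!\left[ c \notin \argmax_{h\in[K]} f^{(h)}(\vx) \right] \;\le\; \frac{1}{\gap} \sum_{h=1}^K \E_{(\vx, c)\sim\cD}\!\left[ \min\!\left( |f^{(h)}(\vx) - p_{h, c}|,\, 1 \right) \right].
\]

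\textbf{Step 3 (assemble).} Plugging (\ref{eqn:multi-step1-prop}) into the Step~2 bound and collecting terms produces the stated inequality. The main conceptual step is the identification $\vq^{(h)} = \E[\tvy^{(h)} \mid c_1, \ldots, c_n]$, which recasts the noisy one-hot label vector as a bounded zero-mean noise perturbation of a clean real-valued regression target so that \Cref{thm:gen-main} applies coordinate-wise; everything else is routine. The only subtlety I foresee is maintaining the capped loss $\min(|\cdot|, 1)$ consistently in both the invocation of \Cref{thm:gen-main} (which requires $\ell$ valued in $[0, 1]$) and the margin argument (which needs $\gap \le 1$ for the capping to be harmless).
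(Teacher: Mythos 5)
Your proposal is correct and follows essentially the same route as the paper's proof: the identification $\E[\tmY \mid \mY] = \mQ$ so that each coordinate $\tvy^{(h)}$ is the clean target $\vq^{(h)}$ plus bounded zero-mean (hence subgaussian) noise, an application of \Cref{thm:gen-main} to each $f^{(h)}$ with a union bound at $\delta' = \delta/K$, and the margin-plus-Markov argument showing misclassification forces $\sum_h |f^{(h)}(\vx) - p_{h,c}| \ge \gap$. If anything, your use of the capped loss $\min(|u-v|,1)$ together with the observation that $\gap \le 1$ keeps the capping harmless is slightly more careful than the paper, which invokes \Cref{thm:gen-main} on the raw absolute difference without remarking that the theorem requires a $[0,1]$-valued loss.
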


Note that the bounds in Theorems~\ref{thm:binary-classification} and~\ref{thm:multi-class} only depend on the clean labels instead of the noisy labels, similar to Theorem~\ref{thm:gen-main}.

\vspace{-2mm}
\section{Experiments}\label{sec:experiements}
\vspace{-2mm}

In this section, we empirically verify the effectiveness of our regularization methods \texttt{RDI} and \texttt{AUX}, and compare them against gradient descent or stochastic gradient descent (GD/SGD) with or without early stopping.
We experiment with three settings of increasing complexities:
\vspace{-2mm}
\begin{itemize}
    \item[Setting 1:] Binary classification on MNIST (``5'' vs. ``8'') using a two-layer wide fully-connected net. 
    \item[Setting 2:] Binary classification on CIFAR (``airplanes'' vs. ``automobiles'') using a $11$-layer convolutional neural net (CNN). 
    \item[Setting 3:] CIFAR-10 classification ($10$ classes) using standard ResNet-34.
\end{itemize}
\vspace{-2mm}
For detailed description see \Cref{sec:spp_exp}.
We obtain noisy labels by randomly corrupting correct labels, where noise rate/level is the fraction of corrupted labels (for CIFAR-10, a corrupted label is chosen uniformly from the other $9$ classes.)

\vspace{-2mm}
\subsection{Performance of Regularization Methods}
\vspace{-2mm}

\begin{figure}[t]
\begin{subfigure}[b]{0.45\linewidth}
    \centering
    \includegraphics[width=0.8\linewidth]{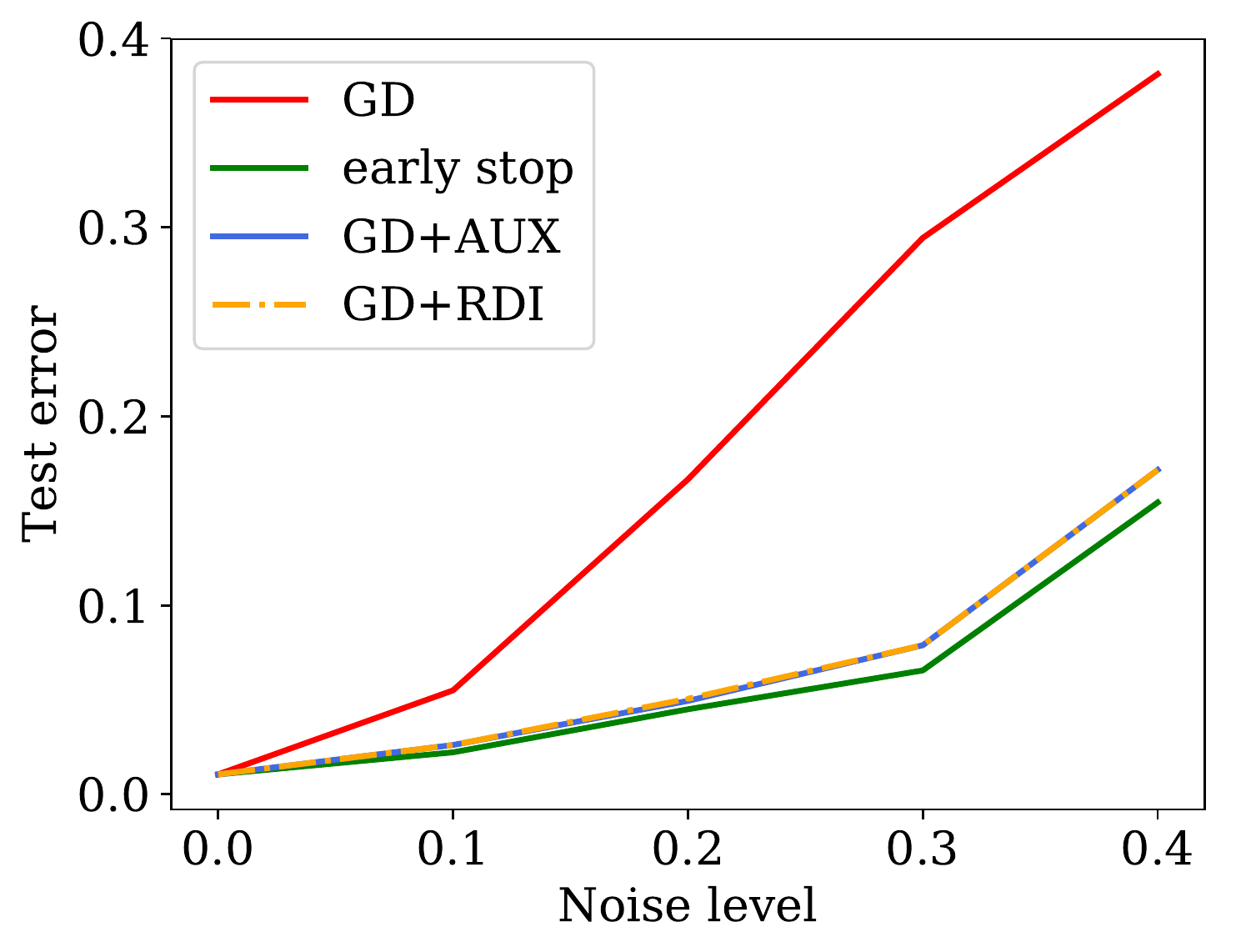}
    \caption{Test error vs. noise level for Setting 1. For each noise level, we do a grid search for $\lambda$ and report the best accuracy.}
    \label{fig:mnist_p_flip}
    \end{subfigure}
    \hfill
        \begin{subfigure}[b]{0.45\linewidth}
        \centering
        \includegraphics[width=0.8\linewidth]{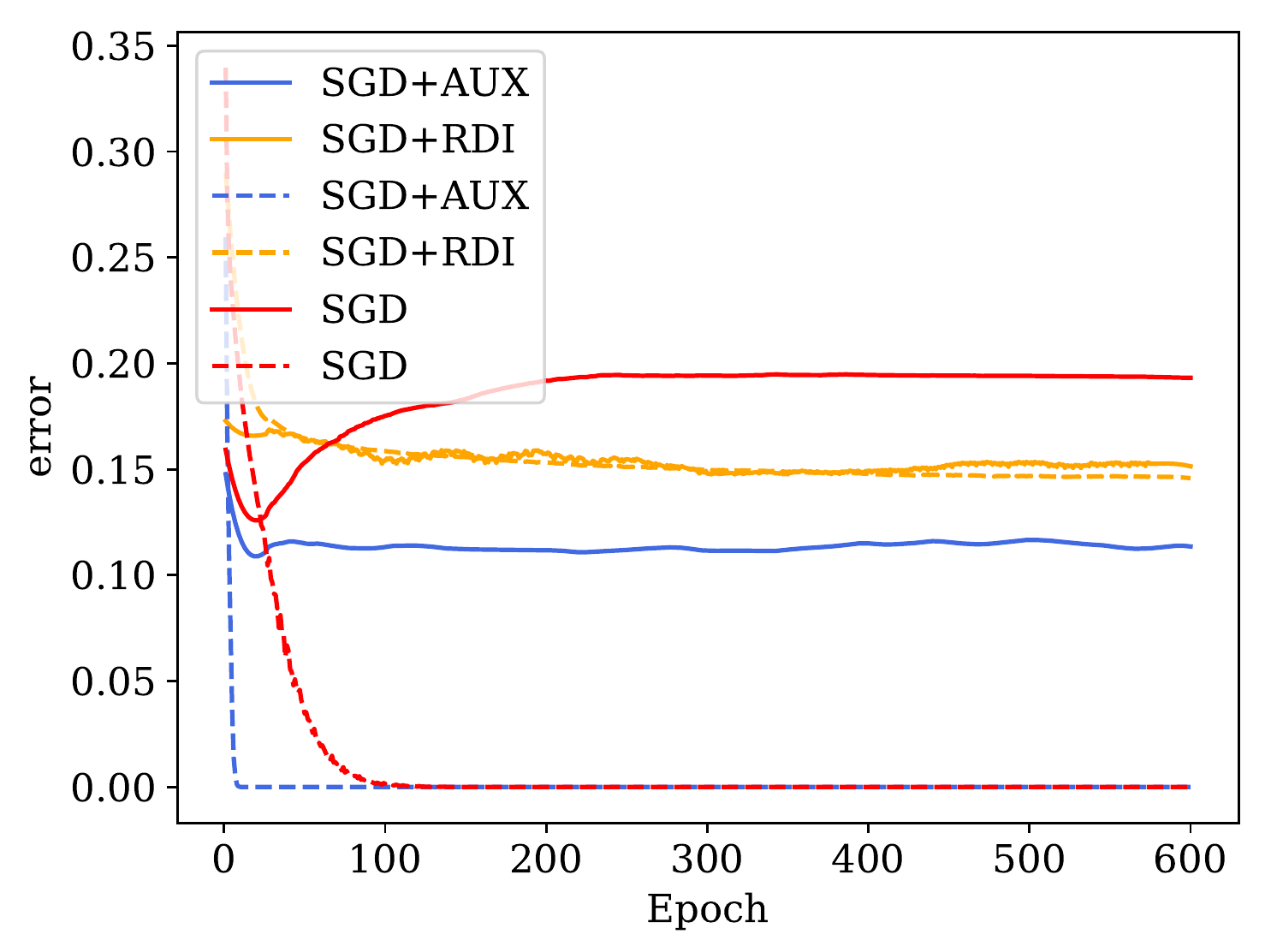}
        \caption{Training (dashed) \& test (solid) errors vs. epoch for Setting 2. Noise rate $=20\%$, $\lambda=4$.
        Training error of \texttt{AUX} is measured with auxiliary variables.}
        \label{fig:cifar_p20_lamb4}
    \end{subfigure}
    \caption{Performance on binary classification using $\ell_2$ loss. Setting 1: MNIST; Setting 2: CIFAR.}
    \label{fig:binary-performance}
\end{figure}

For Setting 1 (binary MNIST), we plot the test errors of different methods under different noise rates in \Cref{fig:mnist_p_flip}. We observe that both methods \texttt{GD+AUX} and \texttt{GD+RDI} consistently achieve much lower test error than vanilla GD which over-fits the noisy dataset, and they achieve similar test error to GD with early stopping.
We see that \texttt{GD+AUX} and \texttt{GD+RDI} have essentially the same performance, which verifies our theory of their equivalence in wide networks (Theorem~\ref{thm:equivalence}).

For Setting 2 (binary CIFAR), \Cref{fig:cifar_p20_lamb4} shows the learning curves (training and test errors) of \texttt{SGD}, \texttt{SGD+AUX} and \texttt{SGD+RDI} for noise rate $20\%$ and $\lambda=4$. 
Additional figures for other choices of $\lambda$ are in \Cref{fig:cifar_p20_lamb}.
We again observe that both \texttt{SGD+AUX} and \texttt{SGD+RDI} outperform vanilla SGD and are comparable to SGD with early stopping. 
We also observe a discrepancy between \texttt{SGD+AUX} and \texttt{SGD+RDI}, possibly due to the noise in SGD or the finite width.  

Finally, for Setting 3 (CIFAR-10), \Cref{tab:cifar10-acc} shows the test accuracies of training with and without \texttt{AUX}.
We train with both mean square error (MSE/$\ell_2$ loss) and categorical cross entropy (CCE) loss.
For normal training without \texttt{AUX}, we report the test accuracy at the epoch where validation accuracy is maximum (early stopping). 
For training with \texttt{AUX}, we report the test accuracy at the last epoch as well as the best epoch.
\Cref{fig:cifar10-acc} shows the training curves for noise rate $0.4$.
We observe that training with \texttt{AUX} achieves very good test accuracy -- even better than the best accuracy of normal training with early stopping, and better than the recent method of \cite{zhang2018generalized} using the same architecture (ResNet-34).
Furthermore, \texttt{AUX} does not over-fit (the last epoch performs similarly to the best epoch).
In addition, we find that in this setting classification performance is insensitive of whether MSE or CCE is used as the loss function.

\begin{figure}[t]
          \begin{minipage}{\textwidth}\hspace{-1mm}
              \begin{minipage}[b]{0.6\textwidth}
                \centering
                \scriptsize
                \begin{tabular}{c|c|c|c|c}\hline
                    Noise rate & 0 & 0.2 & 0.4 & 0.6 \\\hline
                    Normal CCE (early stop) & 94.05$\pm$0.07 & 89.73$\pm$0.43 & 86.35$\pm$0.47 & 79.13$\pm$0.41  \\\hline
                    Normal MSE (early stop) & 93.88$\pm$0.37 & 89.96$\pm$0.13 & 85.92$\pm$0.32 & 78.68$\pm$0.56 \\\hline
                    CCE+AUX (last) & 94.22$\pm$0.10 & 92.07$\pm$0.10 & 87.81$\pm$0.37 & 82.60$\pm$0.29   \\\hline
                    CCE+AUX (best) & 94.30$\pm$0.09 & 92.16$\pm$0.08 & 88.61$\pm$0.14 & 82.91$\pm$0.22  \\\hline
                    MSE+AUX (last) & 94.25$\pm$0.10 & 92.31$\pm$0.18 & 88.92$\pm$0.30 & 83.90$\pm$0.30  \\\hline
                    MSE+AUX (best) & 94.32$\pm$0.06 & 92.40$\pm$0.18 & 88.95$\pm$0.31 & 83.95$\pm$0.30 \\\hline
                    \citep{zhang2018generalized} & - & 89.83$\pm$0.20 & 87.62$\pm$0.26 &  82.70$\pm$0.23 \\\hline
                \end{tabular}
                \captionof{table}{\small CIFAR-10 test accuracies of different methods under different noise rates.}
                \label{tab:cifar10-acc}
              \end{minipage}%
              \hfill
              \begin{minipage}[b]{0.3\textwidth}
                \centering
                \includegraphics[width=0.9\textwidth]{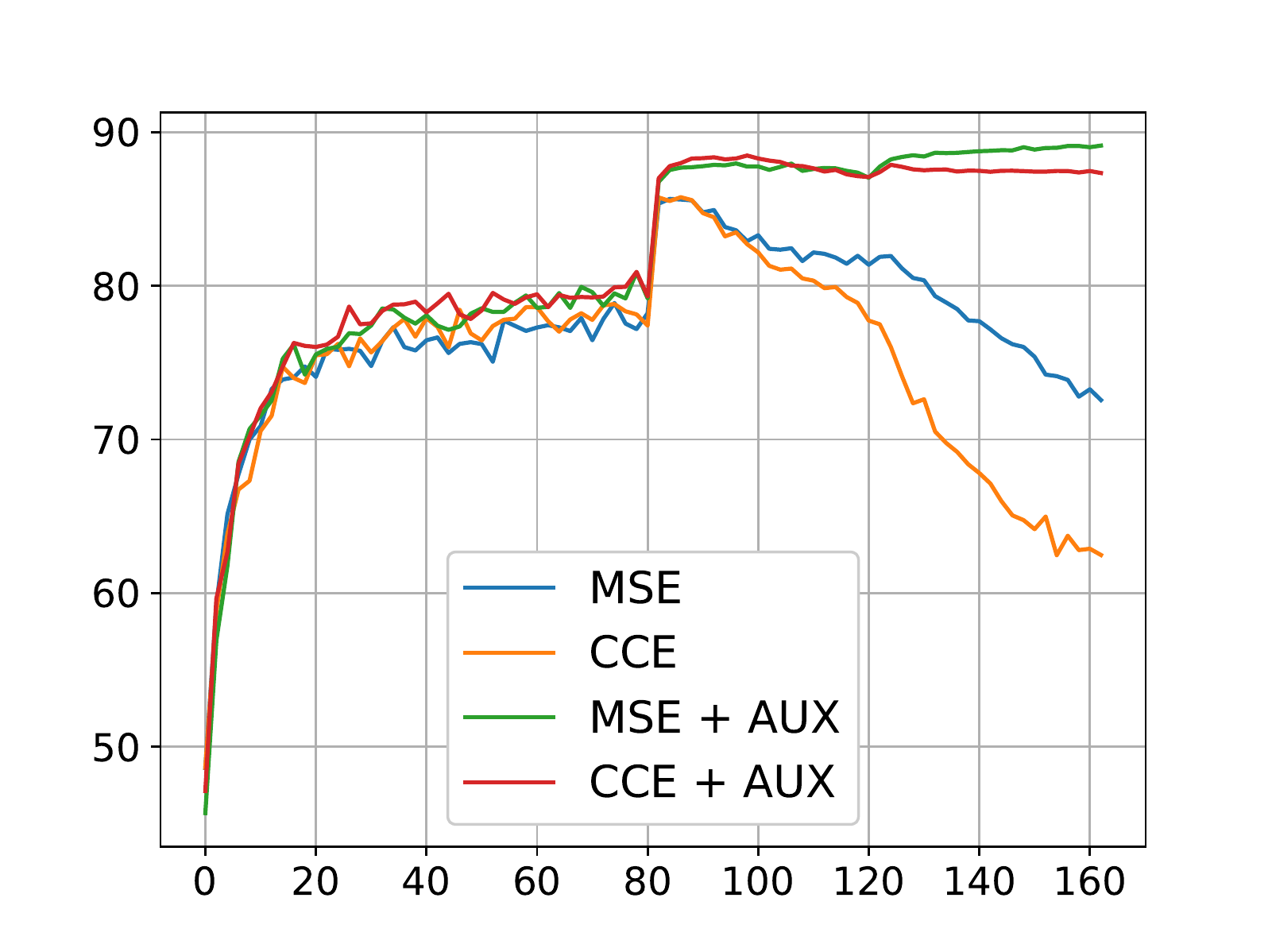}
                \captionof{figure}{\small Test accuracy during CIFAR-10 training (noise $0.4$). }
                \label{fig:cifar10-acc}
              \end{minipage}
          \end{minipage}
         \end{figure}

\vspace{-2mm}
\subsection{Distance of Weights to Initialization, Verification of the NTK Regime}
\vspace{-2mm}

\begin{figure}
    \centering
    \begin{subfigure}[b]{0.3\linewidth}
        \centering
        \includegraphics[width=\linewidth]{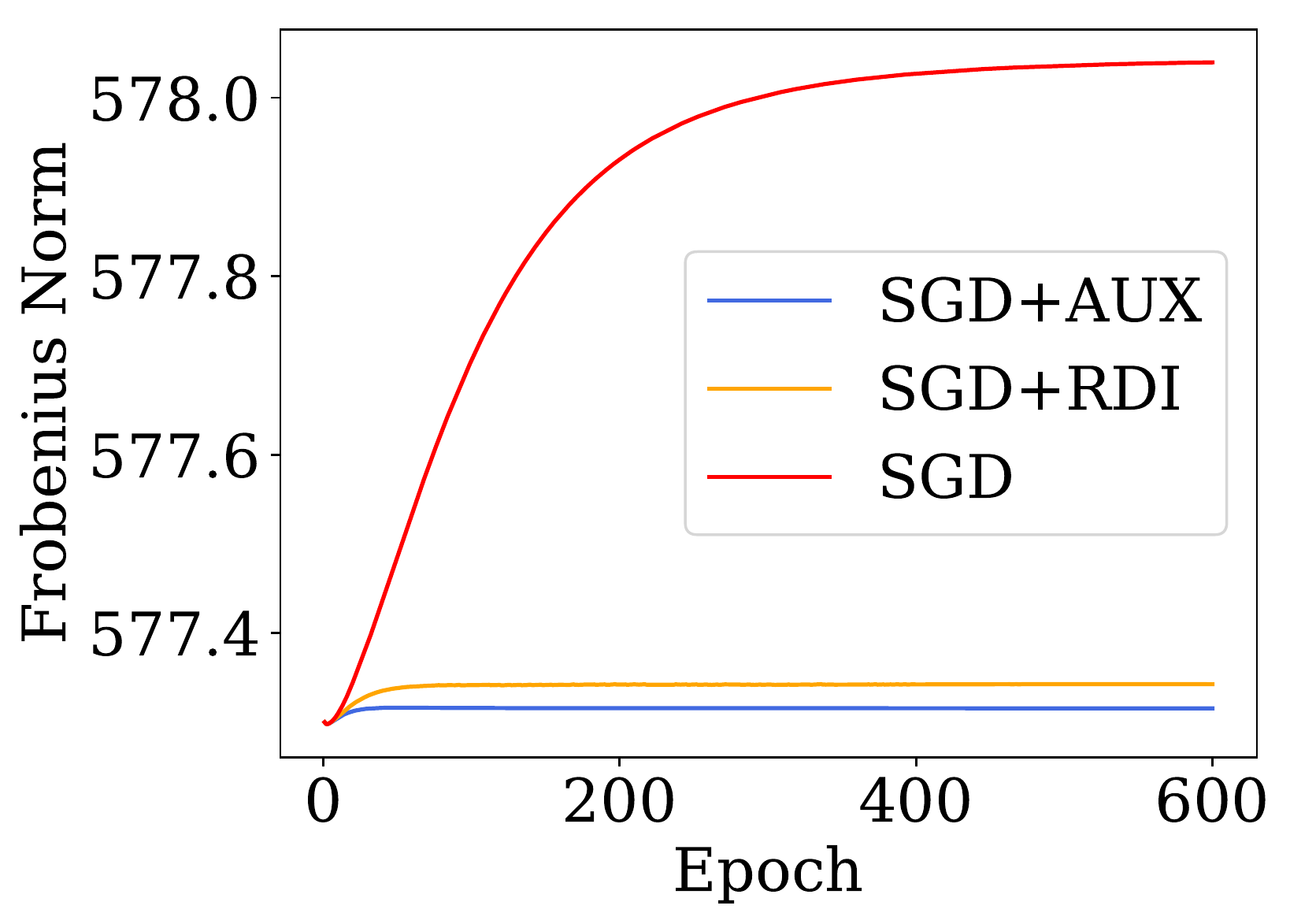}
        \label{fig:supp_norm_7}
    \end{subfigure}
    \hspace{10mm}
    \begin{subfigure}[b]{0.28\linewidth}
        \centering
        \includegraphics[width=\linewidth]{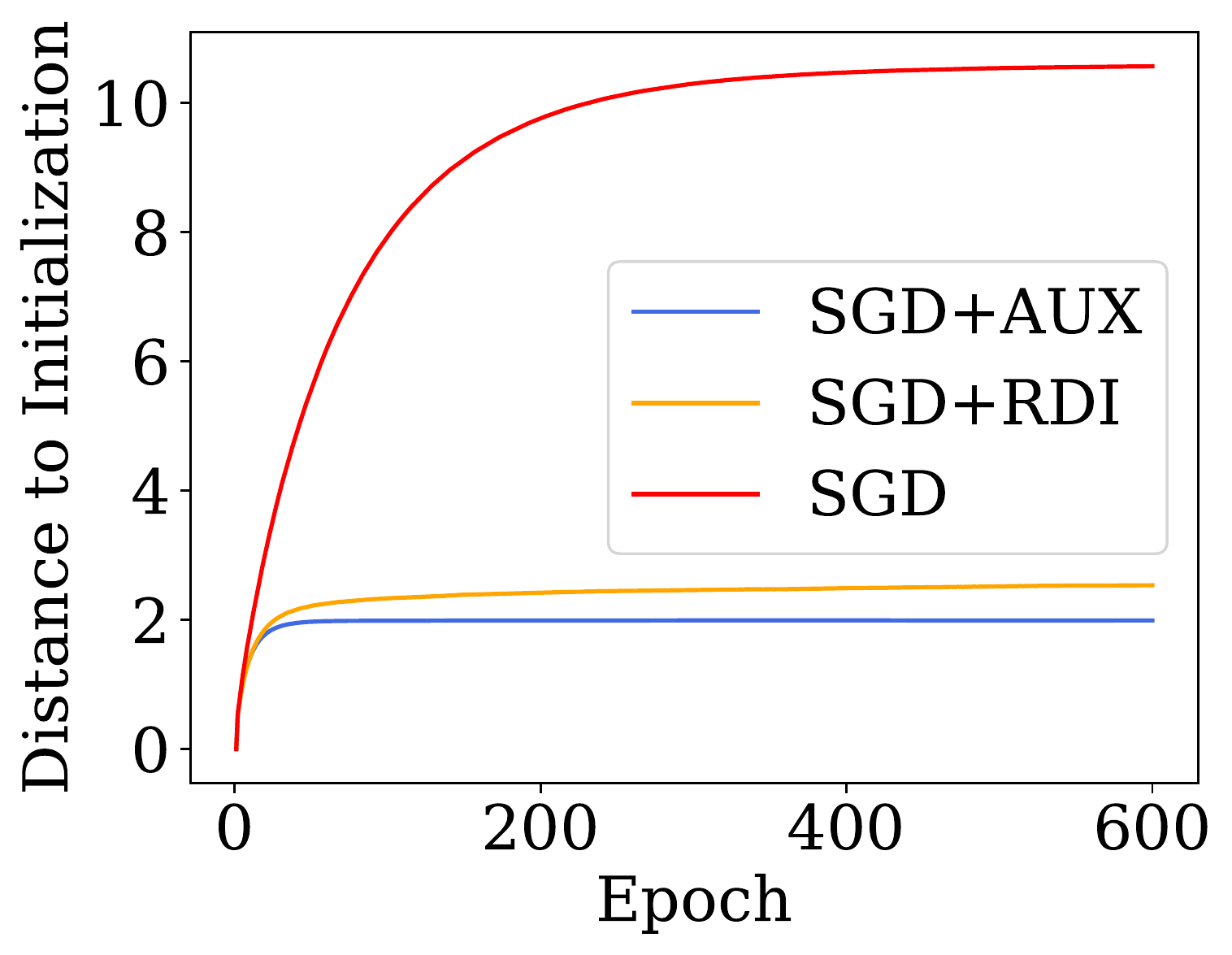}
        \label{fig:supp_difference_7}
    \end{subfigure}
    \shrink{-0.2cm}
    \vspace{-6mm}
    \caption{\small Setting 2, $\|\mW^{(4)}\|_F$ and $\|\mW^{(4)}-\mW^{(4)}(0)\|_F$ during training. Noise $=20\%$, $\lambda =4$.}
    \label{fig:layer_4}
\end{figure}

We also track how much the weights move during training as a way to see whether the neural net is in the NTK regime.
For Settings 1 and 2, we find that the neural nets are likely in or close to the NTK regime because the weight movements are small during training.
\Cref{fig:layer_4} shows in Setting 2 how much the $4$-th layer weights move during training. Additional figures are provided as \Cref{fig:supp_layer_7,fig:p0layer_4,fig:p0layer_7}.

\Cref{tab:dist-to-init} summarizes the relationship between the distance to initialization and other hyper-parameters that we observe from various experiments.
Note that the weights tend to move more with larger noise level, and \texttt{AUX} and \texttt{RDI} can reduce the moving distance as expected (as shown in \Cref{fig:layer_4}).

The ResNet-34 in Setting 3 is likely not operating in the NTK regime, so its effectiveness cannot yet be explained by our theory. This is an intriguing direction of future work.

\begin{table}[h]
    \centering
    \begin{tabular}{|c|c|c|c|c|c|c|}
    \hline
    &    \# samples &  noise level & width & regularization strength $\lambda$ & learning rate\\
         \hline
   Distance & $\nearrow$ & $\nearrow$ &  --- &  $ \searrow$  & ---\\
  \hline
    \end{tabular}
    \caption{\small Relationship between distance to initialization at convergence and other hyper-parameters. ``$\nearrow$'': positive correlation; ``$\searrow$'': negative correlation; `---': no correlation as long as width is sufficiently large  and learning rate is sufficiently small.}
    \label{tab:dist-to-init}
\end{table}

\vspace{-2mm}
\section{Conclusion} \label{sec:conclu}
\vspace{-2mm}

Towards understanding generalization of deep neural networks in presence of noisy labels, this paper presents two simple regularization methods and shows that they are theoretically and empirically effective.
The theoretical analysis relies on the correspondence between neural networks and NTKs.
We believe that a better understanding of such correspondence could help the design of other principled methods in practice.
We also observe that our methods can be effective outside the NTK regime. Explaining this theoretically is left for future work.

\subsubsection*{Acknowledgments}
This work is supported by NSF, ONR, Simons Foundation, Schmidt Foundation, Mozilla Research, Amazon Research, DARPA and SRC. The authors thank Sanjeev Arora for helpful discussions and suggestions. The authors thank Amazon Web Services for cloud computing time.

\bibliography{ref}
\bibliographystyle{plainnat}

\newpage
\appendix

\section{Difference Trick}\label{app:finite_difference}

We give a quick analysis of the ``difference trick'' described in \Cref{footnote:small-init}, i.e., let $f(\vtheta, \vx) = \frac{\sqrt{2}}{2}g(\vtheta_1, \vx) - \frac{\sqrt{2}}{2}g(\vtheta_2, \vx)$ where $\vtheta=(\vtheta_1,\vtheta_2)$, and initialize $\vtheta_1$ and $\vtheta_2$ to be the same (and still random).
The following lemma implies that the NTKs for $f$ and $g$ are the same.

\begin{lem}
If $\vtheta_1(0) = \vtheta_2(0)$, then
\begin{enumerate}[(i)]
    \item $f(\vtheta(0),\vx)= 0,\ \forall \vx$;
    \item $\langle \nabla_{\vtheta}f(\vtheta(0), \vx), \nabla_{\vtheta}f(\vtheta(0), \vx') \rangle = \langle \nabla_{\vtheta_1}g(\vtheta_1(0), \vx), \nabla_{\vtheta_1}g(\vtheta_1(0), \vx') \rangle, \, \forall \vx,\vx'$.
\end{enumerate}

\end{lem}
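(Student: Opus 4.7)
The plan is to prove both parts by direct calculation, exploiting the symmetric structure of the definition $f(\vtheta,\vx) = \frac{\sqrt{2}}{2} g(\vtheta_1,\vx) - \frac{\sqrt{2}}{2} g(\vtheta_2,\vx)$ together with the symmetric initialization $\vtheta_1(0)=\vtheta_2(0)$. No deep machinery is needed; the statement is essentially a sanity check that the ``difference trick'' zeroes out the initial output while preserving the NTK of the base network $g$.

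For part (i), I would simply substitute $\vtheta_1(0)=\vtheta_2(0)$ into the definition. Since $g(\vtheta_1(0),\vx) = g(\vtheta_2(0),\vx)$ for every input $\vx$, the two terms in $f(\vtheta(0),\vx)$ cancel exactly, giving $0$. This step is immediate and requires no assumption on the structure of $g$ beyond that it is a well-defined function of its parameters.

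For part (ii), I would first compute the gradient of $f$ with respect to the block-structured parameter $\vtheta=(\vtheta_1,\vtheta_2)$ at initialization. By linearity and the chain rule,
\[
\nabla_{\vtheta} f(\vtheta(0),\vx) = \left(\tfrac{\sqrt{2}}{2}\nabla_{\vtheta_1} g(\vtheta_1(0),\vx),\; -\tfrac{\sqrt{2}}{2}\nabla_{\vtheta_2} g(\vtheta_2(0),\vx)\right).
\]
Now I invoke $\vtheta_1(0)=\vtheta_2(0)$, which forces $\nabla_{\vtheta_1} g(\vtheta_1(0),\vx) = \nabla_{\vtheta_2} g(\vtheta_2(0),\vx)$; call this common quantity $\vphi_g(\vx)$. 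Then the inner product splits block-wise as
\[
\langle \nabla_{\vtheta} f(\vtheta(0),\vx),\nabla_{\vtheta} f(\vtheta(0),\vx')\rangle = \tfrac12\langle \vphi_g(\vx),\vphi_g(\vx')\rangle + \tfrac12\langle \vphi_g(\vx),\vphi_g(\vx')\rangle,
\]
and the two halves sum to $\langle \vphi_g(\vx),\vphi_g(\vx')\rangle$, which is exactly $\langle \nabla_{\vtheta_1} g(\vtheta_1(0),\vx),\nabla_{\vtheta_1} g(\vtheta_1(0),\vx')\rangle$. The normalization constant $\tfrac{\sqrt{2}}{2}$ is chosen precisely so that the two quadratic contributions add up to $1$ rather than $\tfrac12$, which is the whole point of the trick.

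There is no real obstacle here; the only thing to be careful about is the sign bookkeeping in the gradient of the second term (the minus sign gets squared away by the inner product with itself) and the fact that ``gradient with respect to $\vtheta$'' means the concatenated gradient over both blocks, so that cross terms between the $\vtheta_1$-block and the $\vtheta_2$-block in the inner product vanish by block-diagonal structure rather than by any cancellation. Once that is made explicit, both claims follow in a few lines.
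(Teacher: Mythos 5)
Your proof is correct and follows essentially the same direct calculation as the paper: part (i) by cancellation of the two identical terms at the shared initialization, and part (ii) by splitting the inner product over the two parameter blocks, each contributing $\tfrac12\langle\nabla_{\vtheta_1}g(\vtheta_1(0),\vx),\nabla_{\vtheta_1}g(\vtheta_1(0),\vx')\rangle$. Your extra remarks on the sign bookkeeping and the block-diagonal structure of the concatenated gradient are accurate but add nothing beyond the paper's argument.
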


\begin{proof}
(i) holds by definition. For (ii), we can calculate that
\[\begin{split}
&\left\langle \nabla_{\vtheta} f(\vtheta(0), \vx), \nabla_{\vtheta} f(\vtheta(0), \vx') \right\rangle \\
=\,& \left\langle \nabla_{\vtheta_1} f(\vtheta(0), \vx), \nabla_{\vtheta_1} f(\vtheta(0), \vx') \right\rangle + \left\langle \nabla_{\vtheta_2} f(\vtheta(0), \vx), \nabla_{\vtheta_2} f(\vtheta(0), \vx') \right\rangle \\
=\,& \frac{1}{2} \left\langle \nabla_{\vtheta_1} g(\vtheta_1(0), \vx), \nabla_{\vtheta_1} g(\vtheta_1(0), \vx) \right\rangle+\frac{1}{2}\left\langle \nabla_{\vtheta_2} g(\vtheta_2(0), \vx), \nabla_{\vtheta_2} g(\vtheta_2(0), \vx') \right\rangle\\
=\,&  \left\langle \nabla_{\vtheta_1} g(\vtheta_1(0), \vx), \nabla_{\vtheta_1} g(\vtheta_1(0), \vx') \right\rangle.
\end{split}\]
\end{proof}

The above lemma allows us to ensure zero output at initialization while preserving NTK.
As a comparison, \cite{chizat2018note} proposed the following "doubling trick": neurons in the last layer are duplicated, with the new neurons having the same input weights and opposite output weights. This satisfies zero output at initialization, but destroys the NTK. To see why, note that with the ``doubling trick", the network will output 0 at initialization no matter what the input to its second to last layer is. Thus the gradients with respect to all parameters that are not in the last two layers are 0. 

In our experiments, we observe that the performance of the neural net improves with the ``difference trick.'' See \Cref{fig:diff_trick}.
This intuitively makes sense, since the initial network output is independent of the label (only depends on the input) and thus can be viewed as noise. When the width of the neural net is infinity, the initial network output is actually a zero-mean Gaussian process, whose covariance matrix is equal to the NTK contributed by the gradients of parameters in its last layer. Therefore, learning an infinitely wide neural network with nonzero initial output is equivalent to doing kernel regression with an additive correlated Gaussian noise on training and testing labels.

\begin{figure}[h]
    \centering
    \includegraphics[width = 0.4\linewidth]{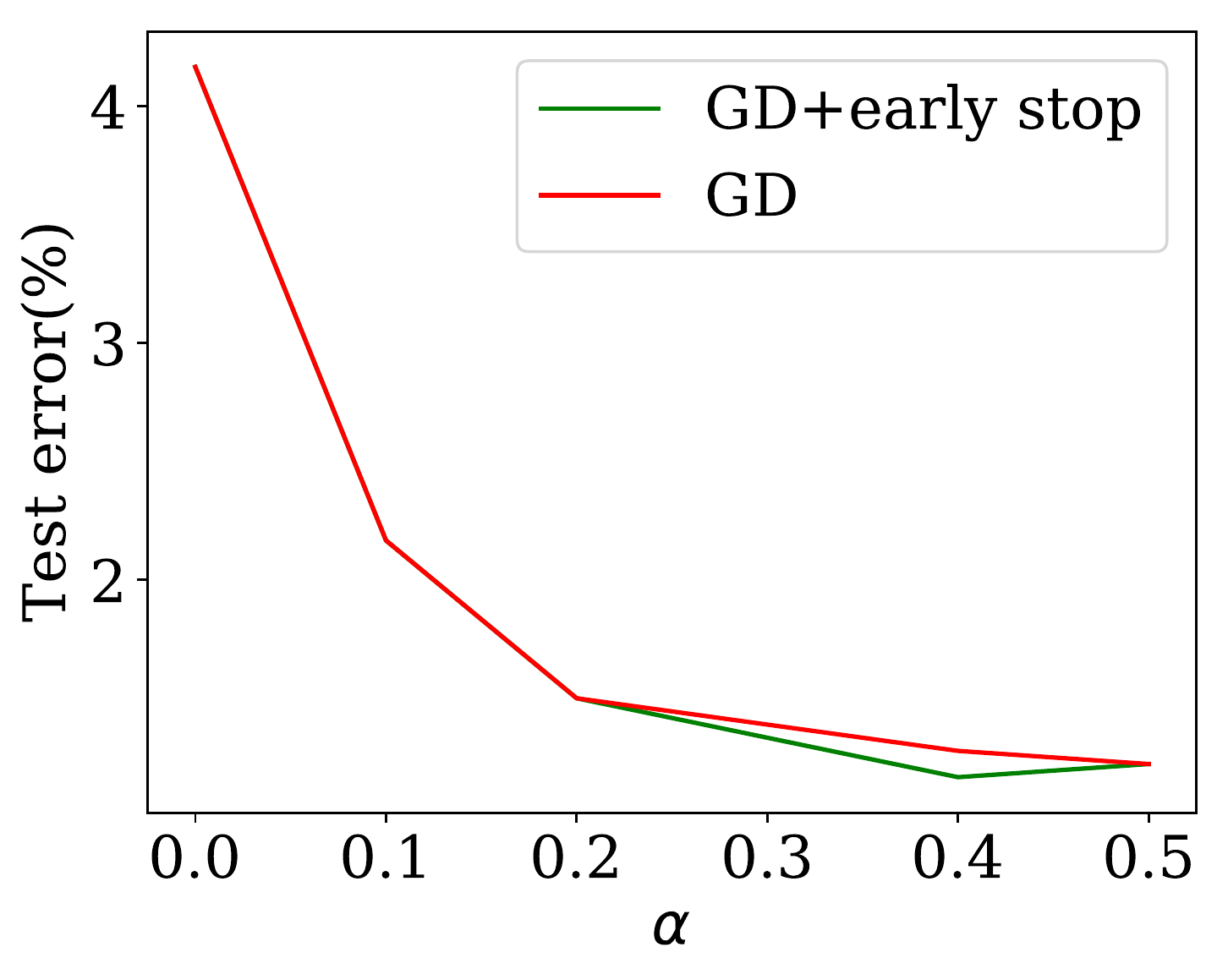}
    \caption{Plot of test error for fully connected two-layer network on MNIST (binary classification between ``5'' and ``8'') with difference trick and different mixing coefficients $\alpha$, where $f(\vtheta_1,\theta_2,\vx) = \sqrt{\frac{\alpha}{2}}g(\vtheta_1,\vx) - \sqrt{\frac{1-\alpha}{2}}g(\vtheta_1,\vx)$. Note that this parametrization preserves the NTK. The network has 10,000 hidden neurons and we train both layers with gradient descent with fixed learning rate for 5,000 steps. The training loss is less than 0.0001 at the time of stopping. We observe that when $\alpha$ increases, the test error   drops because the scale of the initial output of the network goes down.}
    \label{fig:diff_trick}
\end{figure}

\section{Missing Proof in Section~\ref{sec:methods}} \label{sec:proof_methods}

\begin{proof}[Proof of Theorem~\ref{thm:equivalence}]
	The gradient of $\tilde{L}^\aux_{\lambda}(\vtheta, \vb)$ can be written as
	\shrink{-2mm}
	\begin{align*}
	\nabla_{\vtheta} \tilde{L}^\aux_{\lambda}(\vtheta, \vb) = \sum\nolimits_{i=1}^n a_i \vphi(\vx_i), \quad \nabla_{b_i} \tilde{L}^\aux_{\lambda}(\vtheta, \vb) = \lambda a_i, \quad i=1, \ldots, n,
	\end{align*}
	where $a_i = \vphi(\vx_i)^\top (\vtheta - \vtheta(0) ) + \lambda b_i - \ty_i\,(i\in[n])$.
	Therefore we have $\nabla_{\vtheta} \tilde{L}^\aux_{\lambda}(\vtheta, \vb) = \sum_{i=1}^n \frac{1}{\lambda}\nabla_{b_i} \tilde{L}^\aux_{\lambda}(\vtheta, \vb) \cdot \vphi(\vx)$.
	Then, according to the gradient descent update rule~\eqref{eqn:gd-aux}, we know that $\bar{\vtheta}(t)$ and $\vb(t)$ can always be related by
	$
	\bar{\vtheta}(t) = \vtheta(0) + \sum_{i=1}^n \frac{1}{\lambda}b_i(t) \cdot \vphi(\vx_i).
	$
	It follows that 
	\begin{align*}
	\bar{\vtheta}(t+1) 
	&= \bar{\vtheta}(t) - \eta \sum\nolimits_{i=1}^n \left( \vphi(\vx_i)^\top (\bar{\vtheta}(t) - \vtheta(0) ) + \lambda b_i(t) - \ty_i  \right) \vphi(\vx_i) \\
	&= \bar{\vtheta}(t) - \eta \sum\nolimits_{i=1}^n \left( \vphi(\vx_i)^\top (\bar{\vtheta}(t) - \vtheta(0) )  - \ty_i  \right) \vphi(\vx_i) - \eta\lambda\sum\nolimits_{i=1}^n b_i(t)\vphi(\vx_i) \\
	&= \bar{\vtheta}(t) - \eta \sum\nolimits_{i=1}^n \left( \vphi(\vx_i)^\top (\bar{\vtheta}(t) - \vtheta(0) )  - \ty_i  \right) \vphi(\vx_i) - \eta\lambda^2(\bar{\vtheta}(t) - \vtheta(0) ).
	\end{align*}
	On the other hand, from~\eqref{eqn:gd-rdi} we have
	\begin{align*}
	\vtheta(t+1) = \vtheta(t) - \eta  \sum\nolimits_{i=1}^n \left( \vphi(\vx_i)^\top ({\vtheta}(t) - \vtheta(0) )  - \ty_i  \right) \vphi(\vx_i) - \eta\lambda^2({\vtheta}(t) - \vtheta(0) ).
	\end{align*}
	Comparing the above two equations, we find that $\{\vtheta(t)\}$ and $\{\bar{\vtheta}(t)\}$ have the same update rule. Since	$\vtheta(0) = \bar{\vtheta}(0)$, this proves $\vtheta(t) = \bar{\vtheta}(t)$ for all $t$.

	Now we prove the second part of the theorem.
	Notice that $\tilde{L}^\rdi_{\lambda}(\vtheta)$ is a strongly convex quadratic function with Hessian $\nabla^2_{\vtheta}\tilde{L}^\rdi_{\lambda}(\vtheta) = \sum_{i=1}^n \vphi(\vx_i)\vphi(\vx_i)^\top + \lambda^2\mI = \mZ\mZ^\top + \lambda^2\mI$, where $\mZ = \left( \vphi(\vx_1), \ldots, \vphi(\vx_n) \right) $.
	From the classical convex optimization theory, as long as $\eta \le \frac{1}{\norm{\mZ\mZ^\top + \lambda^2\mI}} = \frac{1}{\norm{\mZ^\top\mZ + \lambda^2\mI}} = \frac{1}{\norm{k(\mX,\mX)} + \lambda^2}$, gradient descent converges linearly to the unique optimum $\vtheta^*$ of $\tilde{L}^\rdi_{\lambda}(\vtheta)$, which can be easily obtained:
	\begin{align*}
	\vtheta^* = \vtheta(0) +  \mZ  (k(\mX,\mX) + \lambda^2\mI)^{-1} \tvy.
	\end{align*}
	Then we have
	\begin{align*}
		\vphi(\vx)^\top (\vtheta^* - \vtheta(0) ) = \vphi(\vx)^\top \mZ  (k(\mX,\mX) + \lambda^2\mI)^{-1} \tvy
		= k(\vx, \mX)^\top \left(k(\mX, \mX) + \lambda^2\mI\right)^{-1}  \tvy,
	\end{align*}
	finishing the proof.
\end{proof}

\section{Missing Proofs in Section~\ref{sec:gen}} \label{sec:proof-gen}

\subsection{Proof of Theorem~\ref{thm:gen-main}} \label{sec:proof-gen-main}

Define $\eps_i = \ty_i - y_i$ ($i\in[n]$), and $\ve = (\eps_1, \ldots, \eps_n)^\top = \tvy-\vy$.

We first prove two lemmas.

\begin{lem}\label{lem:train}
	With probability at least $1-\delta$, we have
	\begin{align*}
	\sqrt{\sum_{i=1}^n \left( f^*(\vx_i) - y_i \right)^2} \le
	\frac\lambda2 \sqrt{\vy^\top (k(\mX, \mX))^{-1}\vy} +  \frac{\sigma}{2\lambda} \sqrt{\tr[k(\mX,\mX)]}+\sigma\sqrt{2\log(1/\delta)}.
	\end{align*}
\end{lem}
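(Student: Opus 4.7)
The plan is to decompose $f^*(\mX) - \vy$ into a \emph{bias} term (depending only on the clean signal $\vy$) and a \emph{variance} term (depending only on the noise $\ve = \tvy - \vy$), then bound each term separately. Let $\mK = k(\mX, \mX)$ and observe the identity
\[
f^*(\mX) - \vy = \mK(\mK+\lambda^2\mI)^{-1}\tvy - \vy = -\lambda^2(\mK+\lambda^2\mI)^{-1}\vy + \mK(\mK+\lambda^2\mI)^{-1}\ve.
\]
Thus by the triangle inequality,
\[
\sqrt{\sum_{i=1}^n (f^*(\vx_i)-y_i)^2} \le \lambda^2\|(\mK+\lambda^2\mI)^{-1}\vy\| + \|\mK(\mK+\lambda^2\mI)^{-1}\ve\|.
\]

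For the bias term, I would diagonalize $\mK = \sum_i \mu_i \vu_i\vu_i^\top$ and write
\[
\lambda^4 \vy^\top(\mK+\lambda^2\mI)^{-2}\vy = \sum_i \frac{\lambda^4}{(\mu_i+\lambda^2)^2}(\vu_i^\top\vy)^2.
\]
Since $(\mu_i+\lambda^2)^2 \ge 4\mu_i\lambda^2$ by AM--GM, each coefficient is at most $\lambda^2/(4\mu_i)$, so $\lambda^2\|(\mK+\lambda^2\mI)^{-1}\vy\| \le \tfrac{\lambda}{2}\sqrt{\vy^\top \mK^{-1}\vy}$, matching the first term on the right-hand side.

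For the variance term, set $\mA = \mK(\mK+\lambda^2\mI)^{-1}$ and note that $\ve$ has independent coordinates that are mean-zero subgaussian with parameter $\sigma$ (hence $\E[\eps_i^2]\le \sigma^2$). The independence gives $\E[\|\mA\ve\|^2] \le \sigma^2 \tr(\mA^\top\mA)$, and again applying $(\mu_i+\lambda^2)^2 \ge 4\mu_i\lambda^2$ yields $\tr(\mA^\top\mA) \le \tr(\mK)/(4\lambda^2)$, so
\[
\E[\|\mA\ve\|] \le \sqrt{\E[\|\mA\ve\|^2]} \le \frac{\sigma}{2\lambda}\sqrt{\tr[\mK]}.
\]
The map $\ve \mapsto \|\mA\ve\|$ is $\|\mA\|$-Lipschitz, and $\|\mA\|\le 1$ since its eigenvalues $\mu_i/(\mu_i+\lambda^2)$ lie in $[0,1]$. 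Applying standard concentration of a $1$-Lipschitz function of an independent subgaussian vector (equivalently, a one-sided Hanson--Wright-type deviation) yields $\|\mA\ve\| \le \E[\|\mA\ve\|] + \sigma\sqrt{2\log(1/\delta)}$ with probability at least $1-\delta$, completing the bound.

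The main subtlety I expect is the concentration step: using Lipschitz concentration for arbitrary subgaussian coordinates requires either invoking the convex-Lipschitz version (which does apply since $\ve\mapsto\|\mA\ve\|$ is convex) or, alternatively, extracting a tail bound directly from a moment estimate of $\|\mA\ve\|^2$ via Hanson--Wright. Either route should yield the stated form $\sigma\sqrt{2\log(1/\delta)}$ up to the explicit constant; the bias calculation and the expectation bound are routine eigendecomposition arguments.
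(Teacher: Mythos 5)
Your proposal is correct and follows essentially the same route as the paper: the identity $f^*(\mX)-\vy = -\lambda^2(\mK+\lambda^2\mI)^{-1}\vy + \mK(\mK+\lambda^2\mI)^{-1}\ve$, the triangle inequality, and the AM--GM bound $(\mu_i+\lambda^2)^2\ge 4\mu_i\lambda^2$ for both the bias term and $\tr[\mA^2]\le \tr[\mK]/(4\lambda^2)$. The only divergence is the concentration step, where the paper avoids the convex-Lipschitz subtlety you flag by directly invoking the Hsu--Kakade--Zhang tail bound for quadratic forms of subgaussian vectors, $\norm{\mA\ve}\le\sigma\sqrt{\tr[\mA^2]+2\sqrt{\tr[\mA^4]\log(1/\delta)}+2\norm{\mA^2}\log(1/\delta)}$, which collapses to $\sigma\bigl(\sqrt{\tr[\mA^2]}+\sqrt{2\log(1/\delta)}\bigr)$ --- exactly the Hanson--Wright-type alternative you propose as your fallback.
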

\begin{proof}
In this proof we are conditioned on $\mX$ and $\vy$, and only consider the randomness in $\tvy$ given $\mX$ and $\vy$.

First of all, we can write \[(f^*(\vx_1), \ldots, f^*(\vx_n))^\top = k(\mX, \mX) \left(k(\mX, \mX)+\lambda^2\mI\right)^{-1}\tvy,\]
so we can write the training loss on true labels $\vy$ as 
\begin{equation} \label{eqn:training-loss-bound-1}
\begin{aligned}
    \sqrt{\sum_{i=1}^n \left( f^*(\vx_i) - y_i \right)^2} =&~ \left\|k(\mX, \mX) \left(k(\mX, \mX)+\lambda^2\mI\right)^{-1}\tvy-\vy\right\|\\
    =&~ \left\|k(\mX, \mX) \left(k(\mX, \mX)+\lambda^2\mI\right)^{-1}(\vy+\ve)-\vy\right\|\\
    =&~ \left\|k(\mX, \mX) \left(k(\mX, \mX)+\lambda^2\mI\right)^{-1}\ve-\lambda^2 \left(k(\mX, \mX)+\lambda^2\mI\right)^{-1}\vy\right\|\\
    \leq&~ \left\|k(\mX, \mX) \left(k(\mX, \mX)+\lambda^2\mI\right)^{-1}\ve\right\|+\lambda^2\left\|\left(k(\mX, \mX)+\lambda^2\mI\right)^{-1}\vy\right\| .
\end{aligned} 
\end{equation}


Next, since $\ve$, conditioned on $\mX$ and $\vy$, has independent and subgaussian entries (with parameter $\sigma$), by \citep{hsu2012tail}, for any symmetric matrix $\mA$, with probability at least $1-\delta$,
\begin{equation} \label{eqn:subgaussian-concentration}
\left\|\mA\ve\right\| \le \sigma\sqrt{\tr[\mA^2]+2\sqrt{\tr[\mA^4]\log(1/\delta)} + 2\|\mA^2\|\log(1/\delta)}.
\end{equation}
Let $\mA = k(\mX, \mX) \left(k(\mX, \mX)+\lambda^2\mI\right)^{-1}$ and let $\lambda_1, \ldots, \lambda_n>0$ be the eigenvalues of $k(\mX,\mX)$.
We have
\begin{equation} \label{eqn:training-loss-bound-2}
\begin{aligned}
    \tr[\mA^2] &= \sum_{i=1}^n \frac{\lambda_i^2}{(\lambda_i + \lambda^2)^2}
    \le \sum_{i=1}^n \frac{\lambda_i^2}{4\lambda_i \cdot \lambda^2}
    = \frac{\tr[k(\mX,\mX)]}{4\lambda^2},\\
    \tr[\mA^4] &= \sum_{i=1}^n \frac{\lambda_i^4}{(\lambda_i + \lambda^2)^4}
    \le \sum_{i=1}^n \frac{\lambda_i^4}{4^4\lambda^2\left(\frac{\lambda_i}{3}\right)^3}
    \le \frac{\tr[k(\mX,\mX)]}{9\lambda^2},\\
    \|\mA^2\|& \le 1.
\end{aligned}
\end{equation}
Therefore,
\begin{align*}
    \left\|k(\mX, \mX) \left(k(\mX, \mX)+\lambda^2\mI\right)^{-1}\ve\right\|
    \leq&~ \sigma\sqrt{\frac{\tr[k(\mX,\mX)]}{4\lambda^2}+2\sqrt{\frac{\tr[k(\mX,\mX)]\log(1/\delta)}{9\lambda^2}} +2\log(1/\delta)}\\
    \leq&~ \sigma\left(\sqrt{\frac{\tr[k(\mX,\mX)]}{4\lambda^2}} + \sqrt{2\log(1/\delta)}\right).
\end{align*}

Finally, since $\left(k(\mX, \mX)+\lambda^2\mI\right)^{-2}\preceq \frac1{4\lambda^2}\left(k(\mX, \mX)\right)^{-1}$ (note $(\lambda_i+\lambda^2)^2\ge4\lambda_i\cdot\lambda^2$), we have
\begin{equation} \label{eqn:training-loss-bound-3}
\begin{aligned}
\lambda^2\left\|\left(k(\mX, \mX)+\lambda^2\mI\right)^{-1}\vy\right\| = \lambda^2\sqrt{\vy^\top\left(k(\mX, \mX)+\lambda^2\mI\right)^{-2}\vy}\leq\frac\lambda2\sqrt{\vy^\top (k(\mX, \mX))^{-1}\vy}.
\end{aligned}
\end{equation}
The proof is finished by combining \eqref{eqn:training-loss-bound-1}, \eqref{eqn:training-loss-bound-2} and \eqref{eqn:training-loss-bound-3}.
\end{proof}

Let $\cH$ be the reproducing kernel Hilbert space (RKHS) corresponding to the kernel $k(\cdot,\cdot)$.
Recall that the RKHS norm of a function $f(\vx) = \valpha^\top k(\vx,\mX)$ is
\[\|f\|_\cH = \sqrt{ \valpha^\top k(\mX,\mX)\valpha }.\]

\begin{lem}\label{lem:rkhs}
	With probability at least $1-\delta$, we have
	\begin{align*}
	\norm{f^*}_\cH \le & \sqrt{\vy^\top \left(k(\mX, \mX)+\lambda^2 \mI\right)^{-1}\vy} +\frac{\sigma}{\lambda}\left( \sqrt{n} + \sqrt{2\log(1/\delta)} \right)
	\end{align*}
\end{lem}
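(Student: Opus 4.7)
The plan is to compute the RKHS norm of $f^*$ explicitly in terms of $\tvy$, then split off the noise contribution using triangle inequality (in the PSD seminorm), then control the noise piece via the Hsu subgaussian concentration inequality already used in the proof of Lemma~\ref{lem:train}.

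First I would write $\mK = k(\mX,\mX)$ and note that $f^* = \valpha^\top k(\cdot, \mX)$ with $\valpha = (\mK+\lambda^2\mI)^{-1}\tvy$, so directly from the definition of the RKHS norm,
\[
\|f^*\|_\cH^2 = \valpha^\top \mK \valpha = \tvy^\top (\mK+\lambda^2\mI)^{-1} \mK (\mK+\lambda^2\mI)^{-1} \tvy.
\]
Since $\mK \preceq \mK + \lambda^2\mI$, the matrix $(\mK+\lambda^2\mI)^{-1}\mK(\mK+\lambda^2\mI)^{-1}$ is dominated in the PSD order by $(\mK+\lambda^2\mI)^{-1}$, giving $\|f^*\|_\cH \le \sqrt{\tvy^\top (\mK+\lambda^2\mI)^{-1} \tvy}$.

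Next, because $\vu \mapsto \sqrt{\vu^\top (\mK+\lambda^2\mI)^{-1}\vu}$ is a seminorm on $\R^n$ and $\tvy = \vy + \ve$, I would apply the triangle inequality to get
\[
\|f^*\|_\cH \le \sqrt{\vy^\top (\mK+\lambda^2\mI)^{-1}\vy} + \sqrt{\ve^\top (\mK+\lambda^2\mI)^{-1}\ve}.
\]
The first term is already the desired deterministic quantity, so the remaining task is to bound $\sqrt{\ve^\top (\mK+\lambda^2\mI)^{-1}\ve} = \|\mA\ve\|$ with $\mA = (\mK+\lambda^2\mI)^{-1/2}$.

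Then I would apply the same subgaussian quadratic-form tail bound from \citep{hsu2012tail} that was used in \eqref{eqn:subgaussian-concentration}: conditioned on $\mX,\vy$, the entries of $\ve$ are independent and subgaussian with parameter $\sigma$, so with probability at least $1-\delta$,
\[
\|\mA\ve\|^2 \le \sigma^2\Bigl(\tr[\mA^2] + 2\sqrt{\tr[\mA^4]\log(1/\delta)} + 2\|\mA^2\|\log(1/\delta)\Bigr).
\]
Using the eigenvalues $\lambda_i$ of $\mK$, one has $\tr[\mA^2] = \sum_i \frac{1}{\lambda_i+\lambda^2} \le \frac{n}{\lambda^2}$, $\tr[\mA^4] \le \frac{n}{\lambda^4}$, and $\|\mA^2\| \le \frac{1}{\lambda^2}$. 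Plugging these in, using AM-GM on the cross term ($2\sqrt{ab}\le a+b$), and taking a square root via $\sqrt{a+b}\le\sqrt a+\sqrt b$ collapses everything to $\|\mA\ve\| \le \frac{\sigma}{\lambda}(\sqrt{n} + \sqrt{2\log(1/\delta)})$ (up to the absorbing constants inside the stated form).

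There is really no hard step; the only thing to be careful about is the initial PSD domination $\mK(\mK+\lambda^2\mI)^{-2} \preceq (\mK+\lambda^2\mI)^{-1}$, which one could alternatively deduce eigenvalue-by-eigenvalue, and the routine simplification of the Hsu bound into the clean $\sqrt{n}+\sqrt{2\log(1/\delta)}$ form. The same Hsu inequality was already invoked in the proof of Lemma~\ref{lem:train}, so no new probabilistic machinery is needed.
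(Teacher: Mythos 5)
Your proof is correct and follows essentially the same route as the paper's: the identical PSD domination $(\mK+\lambda^2\mI)^{-1}\mK(\mK+\lambda^2\mI)^{-1}\preceq(\mK+\lambda^2\mI)^{-1}$, the same triangle inequality for the seminorm induced by $(\mK+\lambda^2\mI)^{-1}$, and the same subgaussian quadratic-form bound of Hsu et al. The only (immaterial) difference is in the last step: the paper first bounds $\sqrt{\ve^\top(\mK+\lambda^2\mI)^{-1}\ve}\le\lambda^{-1}\norm{\ve}$ and applies the concentration inequality with $\mA=\mI$, whereas you apply it directly with $\mA=(\mK+\lambda^2\mI)^{-1/2}$; both yield the exact stated constant, and you do not even need the lossier AM-GM step you mention, since the cross term $2\sqrt{n\log(1/\delta)}$ is already dominated by the cross term of $\left(\sqrt{n}+\sqrt{2\log(1/\delta)}\right)^2$.
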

\begin{proof}
In this proof we are still conditioned on $\mX$ and $\vy$, and only consider the randomness in $\tvy$ given $\mX$ and $\vy$.
Note that $f^*(\vx) = \valpha^\top k(\vx, \mX)  $ with $\valpha = \left(k(\mX, \mX) + \lambda^2\mI\right)^{-1} \tvy$.
Since $\left(k(\mX, \mX)+\lambda^2\mI\right)^{-1}\preceq \left(k(\mX, \mX)\right)^{-1}$ and $\left(k(\mX, \mX)+\lambda^2\mI\right)^{-1}\preceq \frac1{\lambda^2}\mI$, we can bound
\begin{align*}
    \norm{f^*}_\cH =&~ \sqrt{\tvy^\top\left(k(\mX, \mX)+\lambda^2\mI\right)^{-1}k(\mX,\mX) \left(k(\mX, \mX)+\lambda^2\mI\right)^{-1} \tvy }\\
    \leq&~ \sqrt{(\vy +\ve)^\top\left(k(\mX, \mX)+\lambda^2\mI\right)^{-1} (\vy+\ve) }\\
    \leq&~ \sqrt{\vy^\top\left(k(\mX, \mX)+\lambda^2\mI\right)^{-1} \vy } + \sqrt{\ve^\top\left(k(\mX, \mX)+\lambda^2\mI\right)^{-1} \ve }\\
    \leq&~ \sqrt{\vy^\top\left(k(\mX, \mX)+\lambda^2\mI\right)^{-1} \vy } + \frac{\sqrt{\ve^\top \ve }}\lambda.
\end{align*}
Since $\ve$ has independent and subgaussian (with parameter $\sigma$) coordinates, using~\eqref{eqn:subgaussian-concentration} with $\mA=\mI$, with probability at least $1-\delta$ we have
\[\sqrt{\ve^\top \ve }\le \sigma \left( \sqrt{n} + \sqrt{2\log(1/\delta)} \right) .\qedhere\]
\end{proof}

Now we prove Theorem~\ref{thm:gen-main}.

\begin{proof}[Proof of Theorem~\ref{thm:gen-main}]
First, by Lemma~\ref{lem:train}, with probability $1-\delta/3$, 
\[	\sqrt{\sum_{i=1}^n \left( f^*(\vx_i) - y_i \right)^2} \le
	\frac\lambda2 \sqrt{\vy^\top (k(\mX, \mX))^{-1}\vy} +  \frac{\sigma}{2\lambda} \sqrt{\tr[k(\mX,\mX)]}+\sigma\sqrt{2\log(3/\delta)},\]
	which implies that the training error on the true labels under loss function $\ell$ is bounded as
	\begin{align*}
	    \frac1n\sum_{i=1}^n \ell(f^*(\vx_i),y_i)=&~\frac1n\sum_{i=1}^n \left( \ell(f^*(\vx_i),y_i) - \ell(y_i,y_i) \right) \\
	    \leq&~ \frac1n\sum_{i=1}^n |f^*(\vx_i)-y_i|\\
	    \leq&~ \frac1{\sqrt n} \sqrt{\sum_{i=1}^n |f^*(\vx_i)-y_i|^2}\\
	    \leq&~ \frac{\lambda}{2}\sqrt{\frac{\vy^\top (k(\mX, \mX))^{-1}\vy}{n}} +  \frac{\sigma}{2\lambda} \sqrt{\frac{\tr[k(\mX,\mX)]}{n}}+\sigma\sqrt{\frac{2\log(3/\delta)}n}.
	\end{align*}
	
	Next, for function class $\cF_B = \{f(\vx) = \valpha^\top k(\vx,\mX): \|f\|_\cH \le B\}$, \cite{bartlett2002rademacher} showed that its empirical Rademacher complexity can be bounded as
\[\hat\cR_S(\cF_B) \triangleq \frac1n\mathop{\E}_{\vct{\gamma}\sim \{\pm 1\}^n}\left[\sup_{f\in \cF_B}\sum_{i=1}^n f(\vx_i)\gamma_i\right]\leq \frac{B\sqrt{ \tr[k(\mX,\mX)]}}{n}.\]
By Lemma~\ref{lem:rkhs}, with probability at least $1-\delta/3$ we have
\begin{align*}
    \norm{f^*}_\cH \le \sqrt{\vy^\top \left(k(\mX, \mX)+\lambda^2\mI\right)^{-1}\vy} + \frac{\sigma}{\lambda}\left( \sqrt{n} + \sqrt{2\log(3/\delta)} \right) \triangleq B'.
\end{align*}

We also recall the standard generalization bound from Rademacher complexity
(see e.g. \citep{mohri2012foundations}): with probability at least $1-\delta$, we have
\begin{equation} \label{eqn:rademacher-theory}
    \sup_{f\in\cF} \left\{\E_{(\vx,y)\sim \cD}[\ell(f(\vx),y)] - \frac1n\sum_{i=1}^n \ell(f(\vx_i),y_i) \right\} \le  2\hat\cR_S(\cF) + 3\sqrt{\frac{\log(2/\delta)}{2n}}.
\end{equation}

Then it is tempting to apply the above bound on the function class $\cF_{B'}$ which contains $f^*$.
However, we are not yet able to do so, because $B'$ depends on the data $\mX$ and $\vy$.
To deal with this, we use a standard $\epsilon$-net argument on the interval that $B'$ must lie in: (note that $\norm{\vy}=O(\sqrt n)$)
\begin{align*}
   B' \in \left[\frac{\sigma}{\lambda}\left( \sqrt{n} + \sqrt{2\log(3/\delta)} \right) , \frac{\sigma}{\lambda}\left( \sqrt{n} + \sqrt{2\log(3/\delta)} \right) + O\left( \frac{n}{\lambda} \right)  \right].
\end{align*}
The above interval has length $O\left( \frac{n}{\lambda} \right) $, so its $\epsilon$-net $\cN$ has size $O\left( \frac{n}{\epsilon \lambda} \right)$.
Using a union bound, we apply the generalization bound \eqref{eqn:rademacher-theory} simultaneously on $\cF_B$ for all $B\in\cN$: with probability at least $1-\delta/3$ we have
\begin{equation*}
    \sup_{f_B\in\cF} \left\{\E_{(\vx,y)\sim \cD}[\ell(f(\vx),y)] - \frac1n\sum_{i=1}^n \ell(f(\vx_i),y_i) \right\} \le  2\hat\cR_S(\cF_B) + O\left( \sqrt{\frac{\log\frac{n}{\epsilon \delta \lambda}}{n}} \right), \quad \forall B\in\cN.
\end{equation*}

By definition there exists $B\in\cN$ such that $B'\le B \le B'+\epsilon$.
Then we also have $f^*\in\cF_B$.
Using the above bound on this particular $B$, and putting all parts together, we know that with probability at least $1-\delta$,
\begin{align*}
    &\E_{(\vx,y)\sim \cD}[\ell(f^*(\vx),y)]\\
    \le\,& \frac1n\sum_{i=1}^n \ell(f^*(\vx_i),y_i) + 2\hat\cR_S(\cF_B) + O\left( \sqrt{\frac{\log\frac{n}{\epsilon \delta \lambda}}{n}} \right) \\
    \le\,& \frac{\lambda}{2}\sqrt{\frac{\vy^\top (k(\mX, \mX))^{-1}\vy}{n}} +  \frac{\sigma}{2\lambda} \sqrt{\frac{\tr[k(\mX,\mX)]}{n}}+\sigma\sqrt{\frac{2\log(3/\delta)}n} \\
    & + \frac{2(B'+\epsilon)\sqrt{ \tr[k(\mX,\mX)]}}{n} + O\left( \sqrt{\frac{\log\frac{n}{\epsilon \delta \lambda}}{n}} \right) \\
    \le \,& \frac{\lambda}{2}\sqrt{\frac{\vy^\top (k(\mX, \mX))^{-1}\vy}{n}} +  \frac{\sigma}{2\lambda} \sqrt{\frac{\tr[k(\mX,\mX)]}{n}}+\sigma\sqrt{\frac{2\log(3/\delta)}n} \\
    & + \frac{2\sqrt{ \tr[k(\mX,\mX)]}}{n} \left( \sqrt{\vy^\top \left(k(\mX, \mX)\right)^{-1}\vy} + \frac{\sigma}{\lambda}\left( \sqrt{n} + \sqrt{2\log(3/\delta)} \right) + \epsilon \right) + O\left( \sqrt{\frac{\log\frac{n}{\epsilon \delta \lambda}}{n}} \right) \\
    = \,& \frac{\lambda}{2}\sqrt{\frac{\vy^\top (k(\mX, \mX))^{-1}\vy}{n}} +  \frac{5\sigma}{2\lambda} \sqrt{\frac{\tr[k(\mX,\mX)]}{n}}+\sigma\sqrt{\frac{2\log(3/\delta)}n} \\
    & + \frac{2\sqrt{ \tr[k(\mX,\mX)]}}{n} \left( \sqrt{\vy^\top \left(k(\mX, \mX)\right)^{-1}\vy} + \frac{\sigma}{\lambda}  \sqrt{2\log(3/\delta)}  + \epsilon \right) + O\left( \sqrt{\frac{\log\frac{n}{\epsilon \delta \lambda}}{n}} \right).
\end{align*}
Then, using $\tr[k(\mX,\mX)] = O(n)$ and choosing $\epsilon=1$, we obtain
\begin{align*}
    &\E_{(\vx,y)\sim \cD}[\ell(f^*(\vx),y)]\\
    \le \,& \frac{\lambda}{2}\sqrt{\frac{\vy^\top (k(\mX, \mX))^{-1}\vy}{n}} +  O\left(\frac{\sigma}{\lambda}\right) +\sigma\sqrt{\frac{2\log(3/\delta)}n} \\& + O\left( \sqrt{\frac{\vy^\top \left(k(\mX, \mX)\right)^{-1}\vy}{n}} \right) +  O\left( \frac{\sigma}{\lambda\sqrt{n}}  \sqrt{2\log(3/\delta)} \right) + O\left(\frac1{\sqrt n}\right) + O\left( \sqrt{\frac{\log\frac{n}{ \delta \lambda}}{n}} \right)\\
    \le \,& \frac{\lambda + O(1)}{2}\sqrt{\frac{\vy^\top (k(\mX, \mX))^{-1}\vy}{n}} + O\left(\frac{\sigma}{\lambda} + \sigma\sqrt{\frac{\log(1/\delta)}{n}} + \frac{\sigma}{\lambda}\sqrt{\frac{\log(1/\delta)}{n}} + \sqrt{\frac{\log\frac{n}{ \delta \lambda}}{n}} \right).
\end{align*}
\end{proof}

\subsection{Proof of Theorem~\ref{thm:binary-classification}} \label{sec:proof-binary-classification}

\begin{proof}[Proof of Theorem~\ref{thm:binary-classification}]
We will apply Theorem~\ref{thm:gen-main} in this proof.
Note that we cannot directly apply it on $\{(\vx_i, y_i, \ty_i)\}$ because the mean of $\ty_i-y_i$ is non-zero (conditioned on $(\vx_i, y_i)$).
Nevertheless, this issue can be resolved by considering $\{(\vx_i, (1-2p)y_i, \ty_i)\}$ instead.\footnote{For binary classification, only the sign of the label matters, so we can imagine that the true labels are from $\{\pm(1-2p)\}$ instead of $\{\pm1\}$, without changing the classification problem.} Then we can easily check that conditioned on $y_i$, $\ty_i - (1-2p)y_i$ has mean $0$ and is subgaussian with parameter $\sigma = O(\sqrt p)$.
With this change, we are ready to apply Theorem~\ref{thm:gen-main}.

Define the following ramp loss for $u\in\R, \bar y \in \{\pm(1-2p)\}$:
\[\ell^{\ramp}(u,\bar y)=\left\{\begin{array}{ll}
(1-2p),   & u\bar y\leq 0, \\
(1-2p)-\frac{1}{1-2p}u\bar y,     &0<u\bar y<(1-2p)^2,\\
0, & u\bar y\geq (1-2p)^2.
\end{array}\right. 
\]
It is easy to see that $\ell^{\ramp}(u, \bar y)$ is 1-Lipschitz in $u$ for $\bar y \in \{\pm(1-2p)\}$, and satisfies $\ell^{\ramp}(\bar y, \bar y)=0$ for $\bar y \in \{\pm(1-2p)\}$. 
Then by Theorem~\ref{thm:gen-main}, with probability at least $1-\delta$,
\begin{align*}
    & \E_{(\vx, y)\sim\cD}\left[ \ell^{\ramp}(f^*(\vx), (1-2p)y) \right] \\
	\le\, & \frac{\lambda + O(1)}{2}\sqrt{\frac{(1-2p)\vy^\top (k(\mX, \mX))^{-1}\cdot(1-2p)\vy}{n}} + O\left(\frac{\sqrt p}{\lambda} +  \sqrt{\frac{p\log(1/\delta)}{n}} + \sqrt{\frac{\log\frac{n}{ \delta \lambda}}{n}} \right)\\
	=\, & \frac{(1-2p)(\lambda + O(1))}{2}\sqrt{\frac{\vy^\top (k(\mX, \mX))^{-1}\vy}{n}} + O\left(\frac{\sqrt p}{\lambda} +  \sqrt{\frac{p\log(1/\delta)}{n}} + \sqrt{\frac{\log\frac{n}{ \delta \lambda}}{n}} \right) .
\end{align*}

Note that $\ell^{\ramp}$ also bounds the 0-1 loss (classification error) as
\[
\ell^{\ramp}(u,(1-2p)y) \geq (1-2p)\indict[\sgn(u)\neq \sgn(y)]=(1-2p)\indict[\sgn(u)\neq y], \quad \forall u\in\R, y\in\{\pm1\}.
\]
Then the conclusion follows.
\end{proof}

\subsection{Proof of Theorem~\ref{thm:multi-class}} \label{sec:proof-multi-class}

\begin{proof}[Proof of Theorem~\ref{thm:multi-class}]
    By definition, we have $\E[\tvy_i | c_i] = \E[\vecte^{(\tc_i)} | c_i] = \vp_{c_i}$ (the $c_i$-th column of $\mP$). This means $ \E[\tmY | \mQ] = \mQ$.
    Therefore we can view $\mQ$ as an encoding of clean labels, and then the observed noisy labels $\tmY$ is $\mQ$ plus a zero-mean noise. (The noise is always bounded by $1$ so is subgaussian with parameter $1$.) This enables us to apply Theorem~\ref{thm:gen-main} to each $f^{(h)}$, which says that with probability at least $1-\delta'$,
    \begin{align*}
        \E_{(\vx, c)\sim\cD} \left| f^{(h)}(\vx) - p_{h, c} \right| 
        \le \frac{\lambda + O(1)}{2}\sqrt{\frac{(\vq^{(h)})^\top (k(\mX, \mX))^{-1}\vq^{(h)}}{n}} + O\left(\frac{1}{\lambda} +   \sqrt{\frac{\log\frac{1}{\delta'}}{n}} + \sqrt{\frac{\log\frac{n}{ \delta' \lambda}}{n}} \right).
    \end{align*}
    Letting $\delta'=\frac{\delta}{K}$ and taking a union bound over $h\in[K]$, we know that the above bound holds for every $h$ simultaneously with probability at least $1-\delta$.
    
    Now we proceed to bound the classification error.
    Note that $c\notin \argmax_{h\in[K]} f^{(h)}(\vx)$ implies $\sum_{h=1}^K \left|f^{(h)}(\vx) - p_{h, c} \right| \ge \gap$.
    Therefore the classification error can be bounded as
    \begin{align*}
        &\Pr_{(\vx, c)\sim\cD} \left[c\notin \argmax_{h\in[K]} f^{(h)}(\vx)\right] 
        \le \Pr_{(\vx, c)\sim\cD} \left[ \sum_{h=1}^K \left|f^{(h)}(\vx) - p_{h, c} \right| \ge \gap \right] \\
        \le\,& \frac{1}{\gap} \E_{(\vx, c)\sim\cD} \left[ \sum_{h=1}^K \left|f^{(h)}(\vx) - p_{h, c} \right|  \right]
        = \frac{1}{\gap} \sum_{h=1}^K  \E_{(\vx, c)\sim\cD} \left[ \left|f^{(h)}(\vx) - p_{h, c} \right|  \right] \\
        \le\,& \frac{1}{\gap} \left( \frac{\lambda + O(1)}{2}\sum_{h=1}^K\sqrt{\frac{(\vq^{(h)})^\top (k(\mX, \mX))^{-1}\vq^{(h)}}{n}} + K\cdot O\left(\frac{1}{\lambda} +   \sqrt{\frac{\log\frac{1}{\delta'}}{n}} + \sqrt{\frac{\log\frac{n}{ \delta' \lambda}}{n}} \right) \right),
    \end{align*}
    completing the proof.
\end{proof}

\section{Experiment Details and Additional Figures}\label{sec:spp_exp}

In Setting 1, we train a two-layer neural network with 10,000 hidden neurons on MNIST (``5'' vs ``8''). 
In Setting 2, we train a CNN, which has 192 channels for each of its 11 layers, on CIFAR (``airplanes'' vs ``automobiles''). We do not have biases in these two networks. 
In Setting 3, we use the standard ResNet-34.


In Settings 1 and 2, we use a fixed learning rate for GD or SGD, and we do not use tricks like batch normalization, data augmentation, dropout, etc., except the difference trick in \Cref{app:finite_difference}.
We also freeze the first and the last layer of the CNN and the second layer of the fully-connected net.\footnote{This is because the NTK scaling at initialization balances the norm of the gradient in each layer, and the first and the last layer weights have smaller norm than intermediate layers.
Since different weights are expected to move the same amount during training \citep{du2018algorithmic}, the first and last layers move relatively further from their initializations.
By freezing them during training, we make the network closer to the NTK regime.}

In Setting 3, we use SGD with $0.9$ momentum, weight decay of $5\times10^{-4}$, and batch size $128$.  The learning rate is $0.1$ initially and is divided by $10$ after $82$ and $123$ epochs ($164$ in total).
Since we observe little over-fitting to noise  in the first stage of training (before learning rate decay), we restrict the regularization power of AUX by applying weight decay on auxiliary variables, and dividing their weight decay factor by 10 after each learning rate decay.

See \Cref{fig:cifar_p20_lamb,fig:supp_layer_7,fig:p0layer_4,fig:p0layer_7} for additional results for Setting 2 with different $\lambda$'s.

\begin{figure}[htbp]
    \centering
    \begin{subfigure}[b]{0.49\linewidth}
        \centering
        \includegraphics[width=\linewidth]{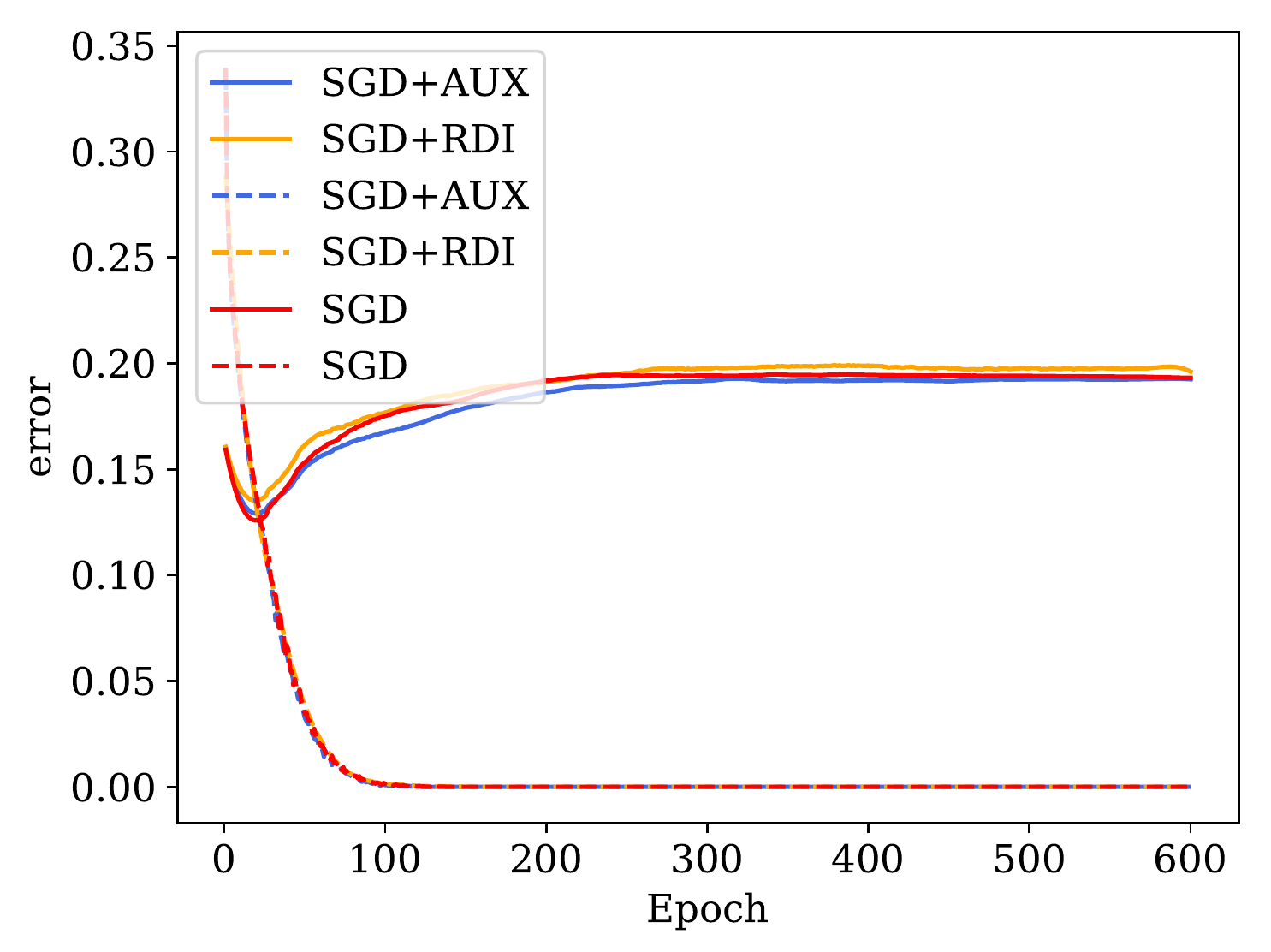}
        \caption{$\lambda = 0.25$}
        \label{fig:supp_cifar_p20_lamb25}
    \end{subfigure}
    \begin{subfigure}[b]{0.49\linewidth}
        \centering
        \includegraphics[width=\linewidth]{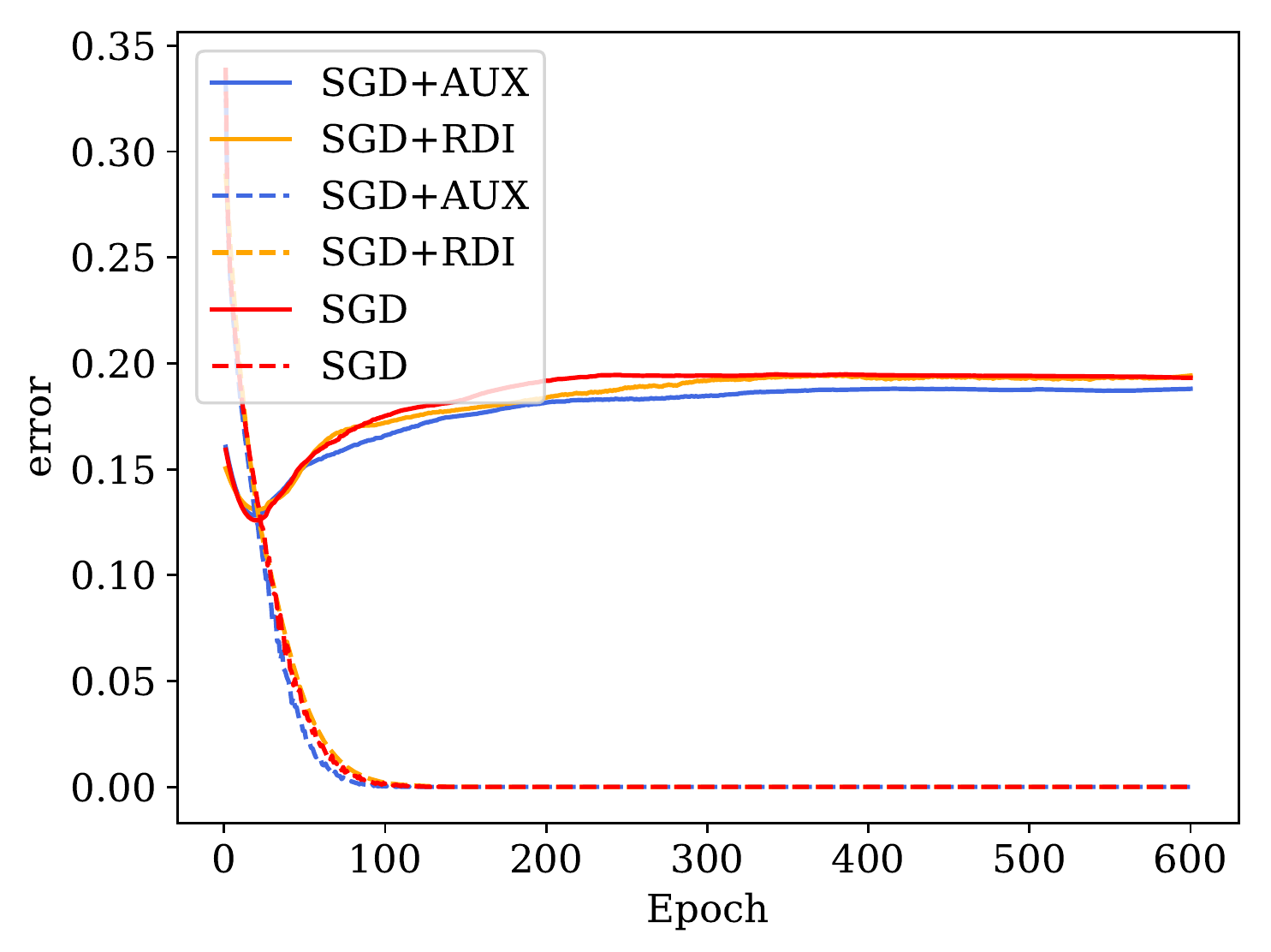}
        \caption{$\lambda = 0.5$}
        \label{fig:supp_cifar_p20_lamb50}
    \end{subfigure}
    \begin{subfigure}[b]{0.49\linewidth}
        \centering
        \includegraphics[width=\linewidth]{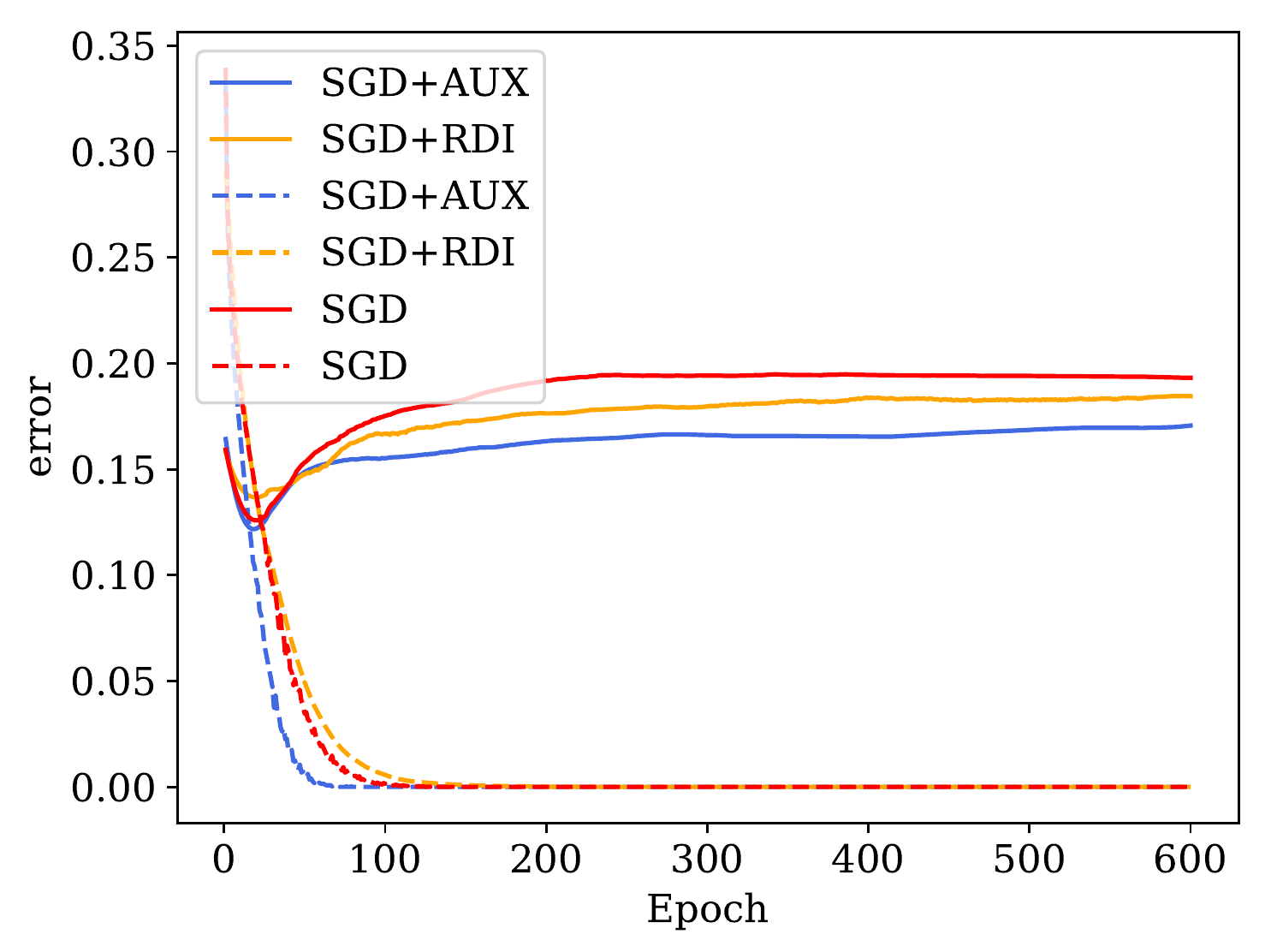}
        \caption{$\lambda = 1$}
        \label{fig:supp_cifar_p20_lamb1}
    \end{subfigure}
        \begin{subfigure}[b]{0.49\linewidth}
        \centering
        \includegraphics[width=\linewidth]{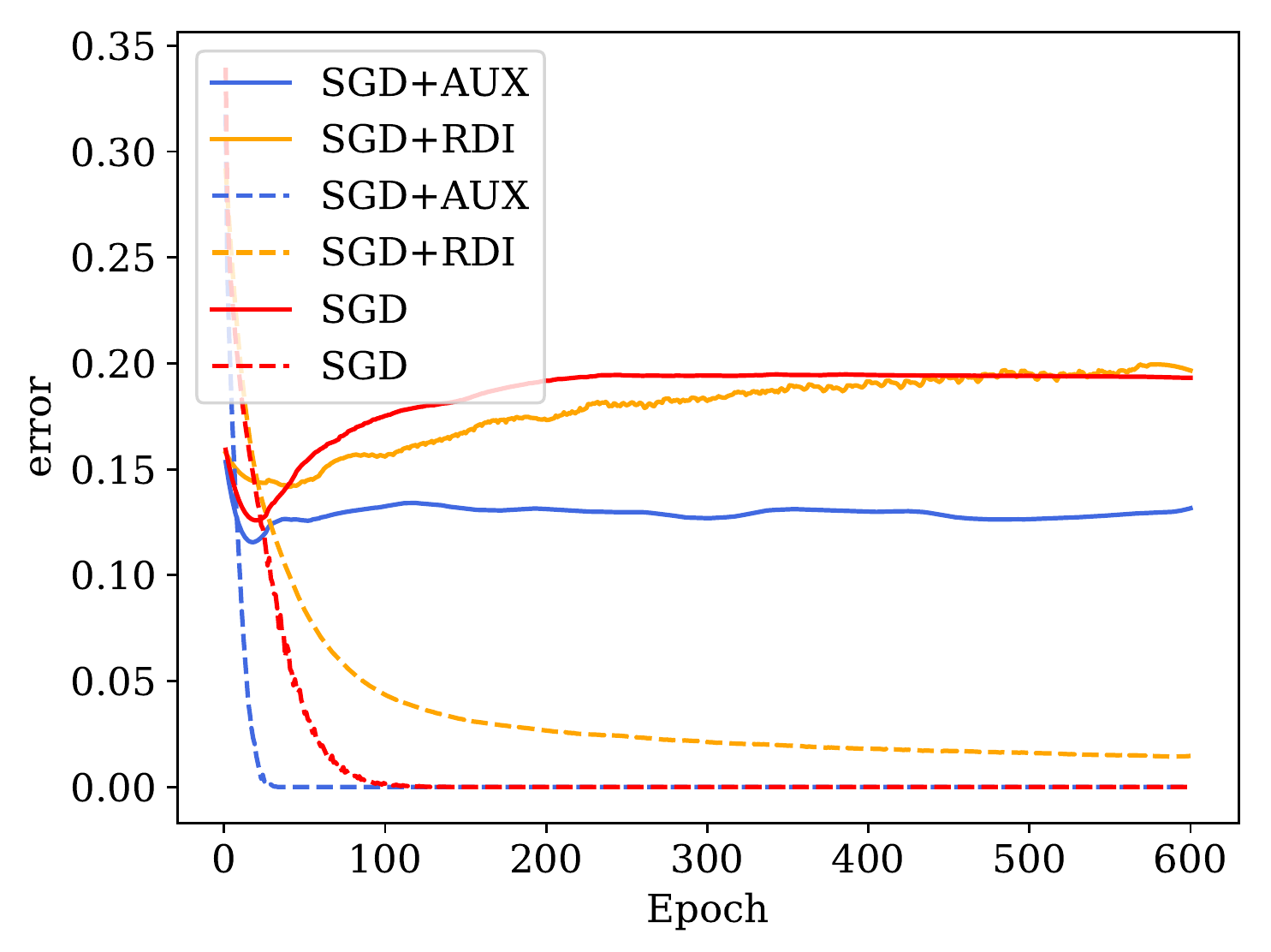}
        \caption{$\lambda = 2$}
        \label{fig:supp_cifar_p20_lamb2}
    \end{subfigure}
        \begin{subfigure}[b]{0.49\linewidth}
        \centering
        \includegraphics[width=\linewidth]{figure/cifar_p20_lamb4.pdf}
        \caption{$\lambda = 4$}
        \label{fig:supp_cifar_p20_lamb4}
    \end{subfigure}
        \begin{subfigure}[b]{0.49\linewidth}
        \centering
        \includegraphics[width=\linewidth]{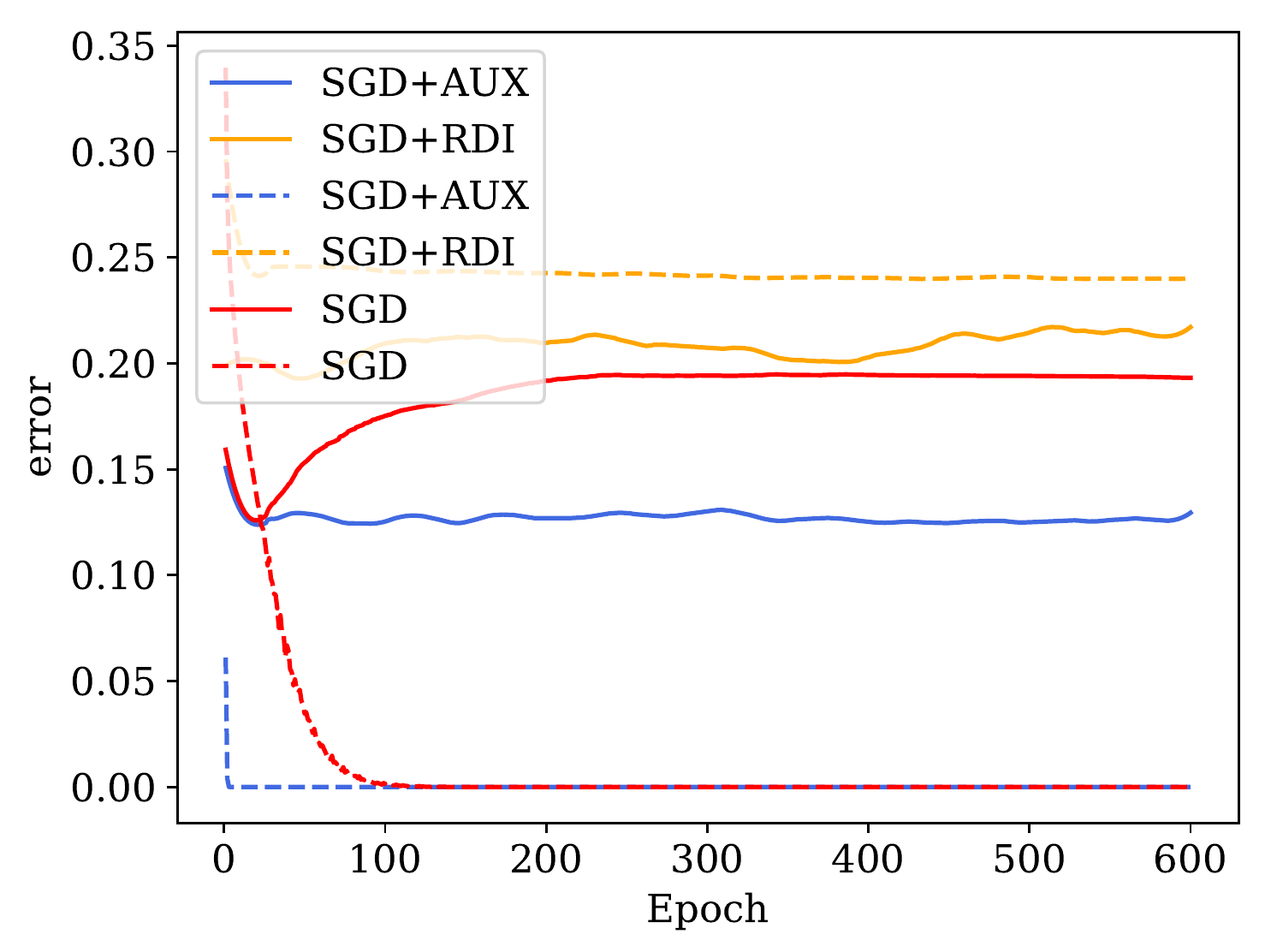}
        \caption{$\lambda = 8$}
        \label{fig:supp_cifar_p20_lamb8}
    \end{subfigure}

    \caption{Training (dashed) \& test (solid) errors vs. epoch for Setting 2. Noise rate $=20\%$, $\lambda\in\{0.25,0.5,1,2,4,8\}$.
        Training error of \texttt{AUX} is measured with auxiliary variables.}
    \label{fig:cifar_p20_lamb}
\end{figure}

\begin{figure}
    \centering
    \begin{subfigure}[b]{0.4\linewidth}
        \centering
        \includegraphics[width=\linewidth]{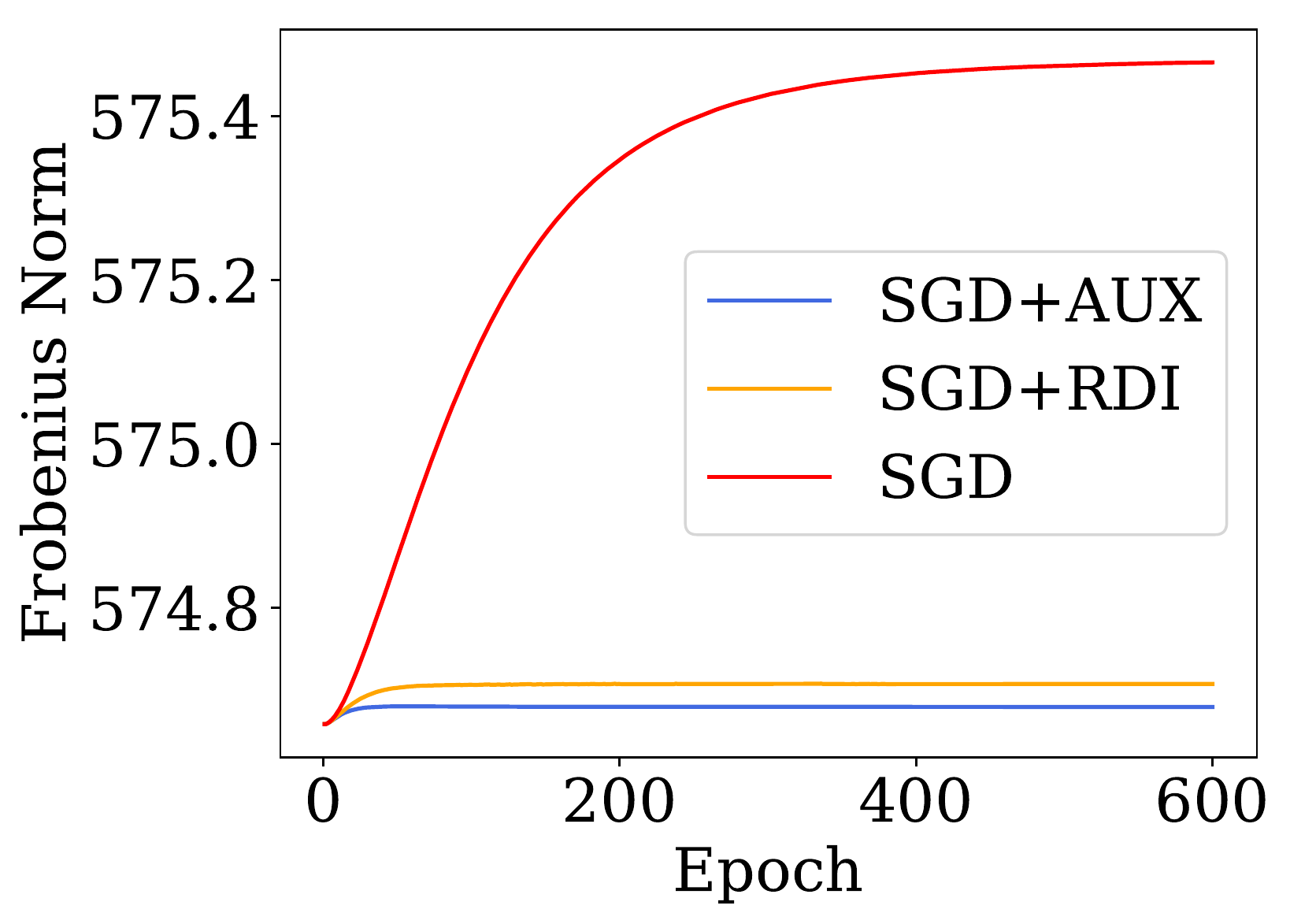}
        \label{fig:supp_norm_7}
    \end{subfigure}
    \hspace{1cm}
    \begin{subfigure}[b]{0.38\linewidth}
        \centering
        \includegraphics[width=\linewidth]{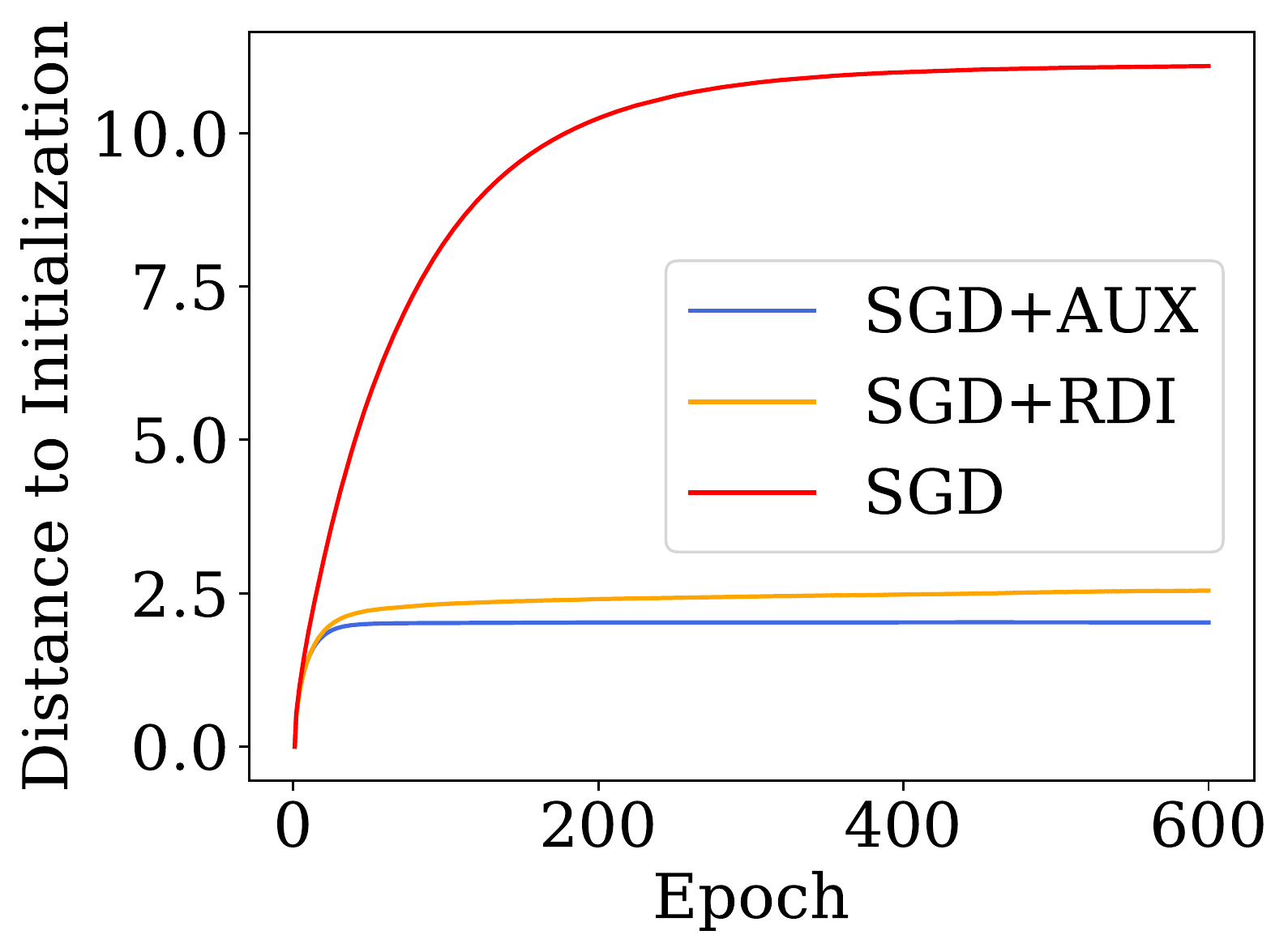}
        \label{fig:supp_difference_7}
    \end{subfigure}
    \shrink{-0.2cm}
    \caption{Setting 2, $\|\mW^{(7)}\|_F$ and $\|\mW^{(7)}-\mW^{(7)}(0)\|_F$ during training. Noise rate $=20\%$, $\lambda =4$.}
    \label{fig:supp_layer_7}
\end{figure}

\begin{figure}
    \centering
    \begin{subfigure}[b]{0.4\linewidth}
        \centering
        \includegraphics[width=\linewidth]{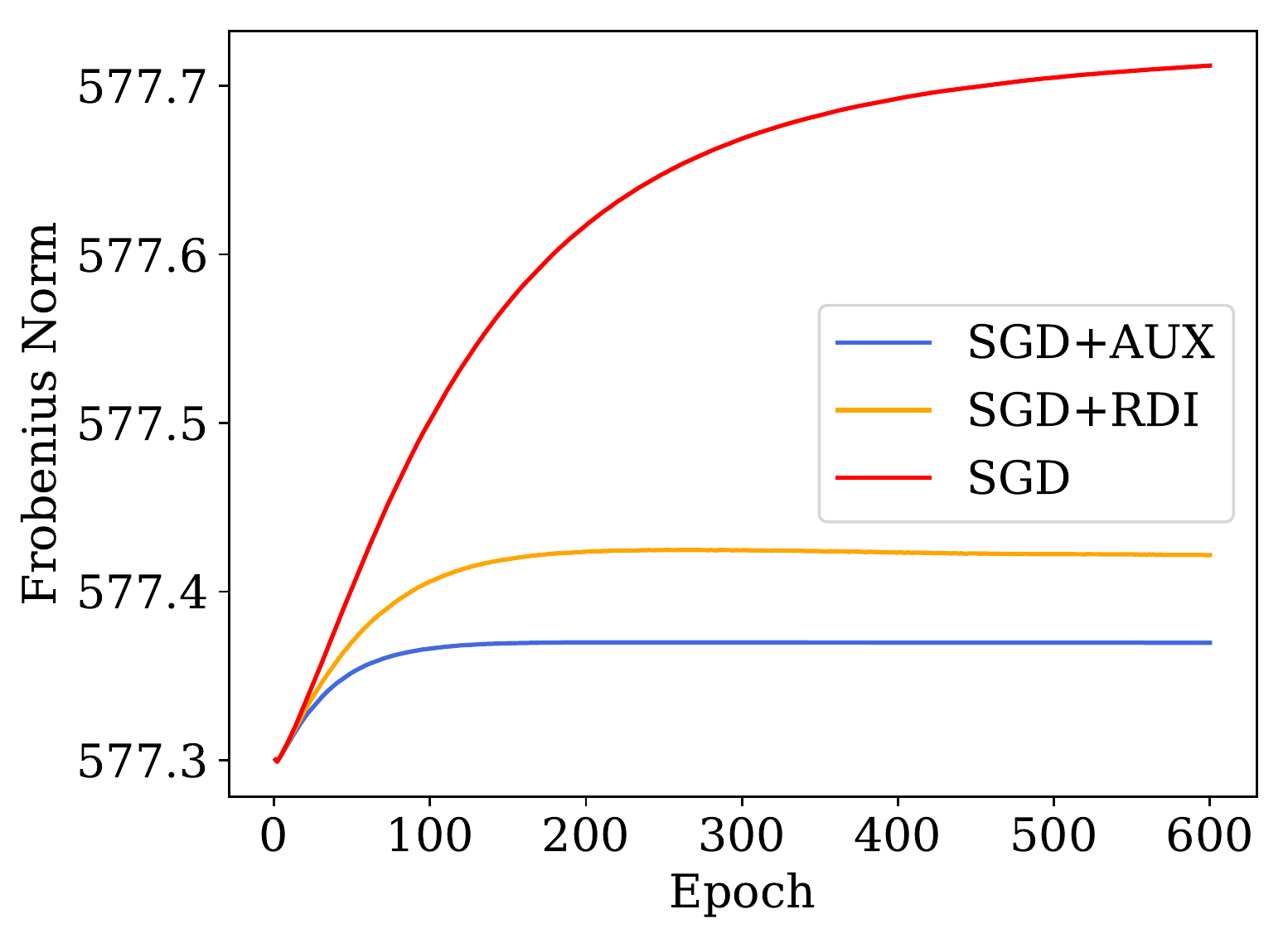}
        \label{fig:p0norm_4}
    \end{subfigure}
    \hspace{1cm}
    \begin{subfigure}[b]{0.37\linewidth}
        \centering
        \includegraphics[width=\linewidth]{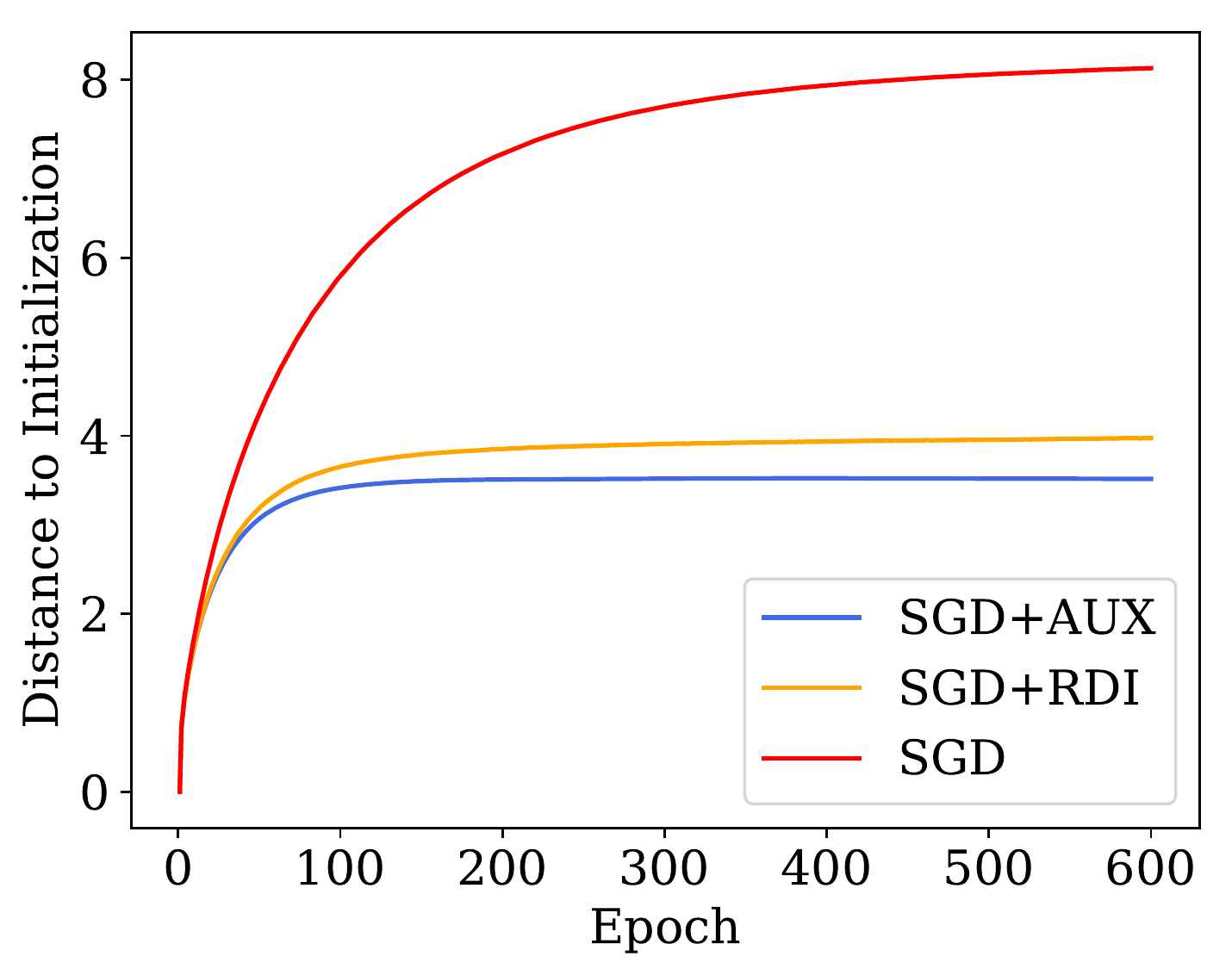}
        \label{fig:p0difference_4}
    \end{subfigure}
    \shrink{-0.2cm}
    \caption{Setting 2, $\|\mW^{(4)}\|_F$ and $\|\mW^{(4)}-\mW^{(4)}(0)\|_F$ during training. Noise rate $=0$, $\lambda = 2$.}
    \label{fig:p0layer_4}
\end{figure}

\begin{figure}
    \centering
    \begin{subfigure}[b]{0.4\linewidth}
        \centering
        \includegraphics[width=\linewidth]{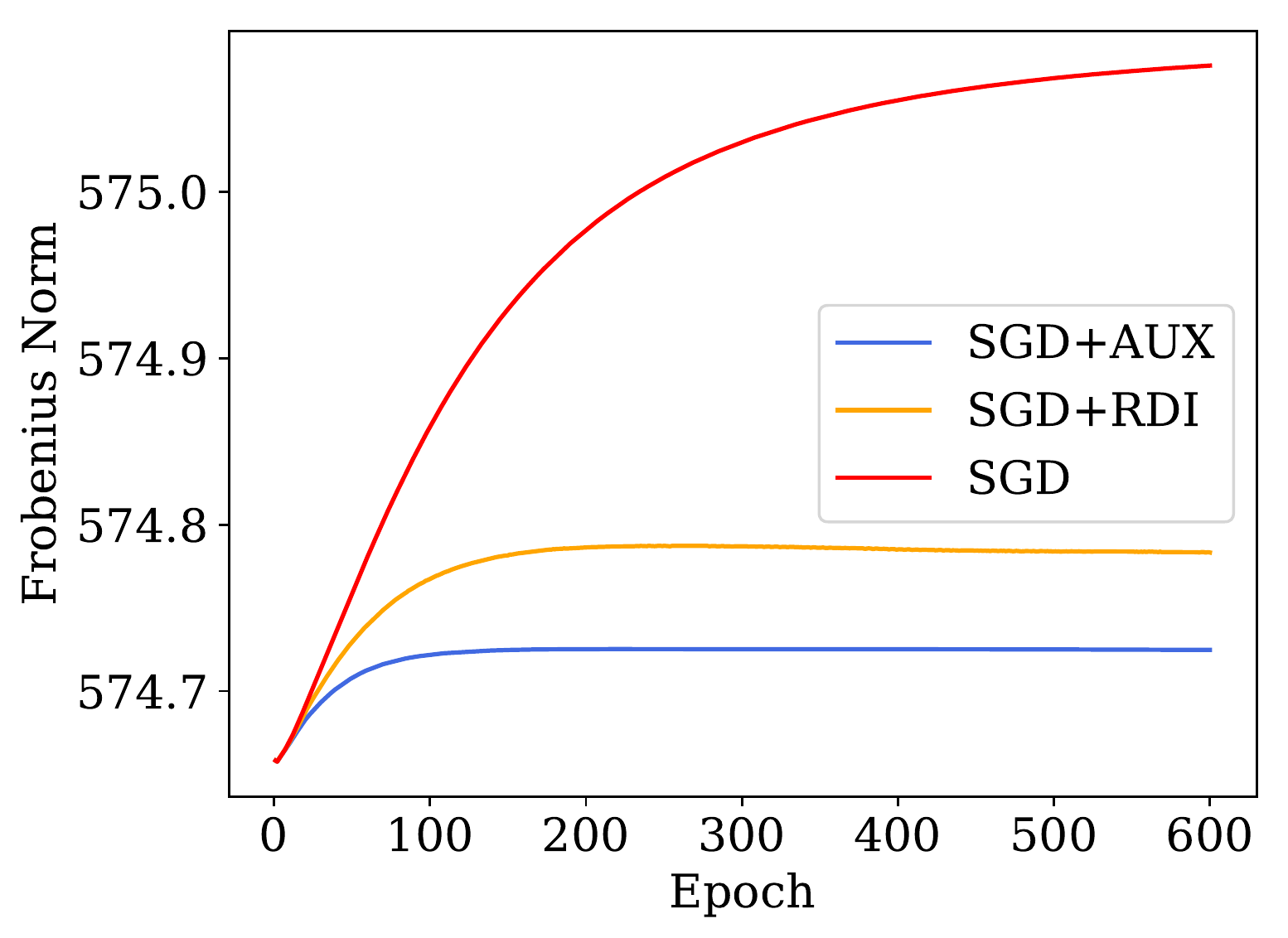}
        \label{fig:p0norm_7}
    \end{subfigure}
    \hspace{1cm}
    \begin{subfigure}[b]{0.37\linewidth}
        \centering
        \includegraphics[width=\linewidth]{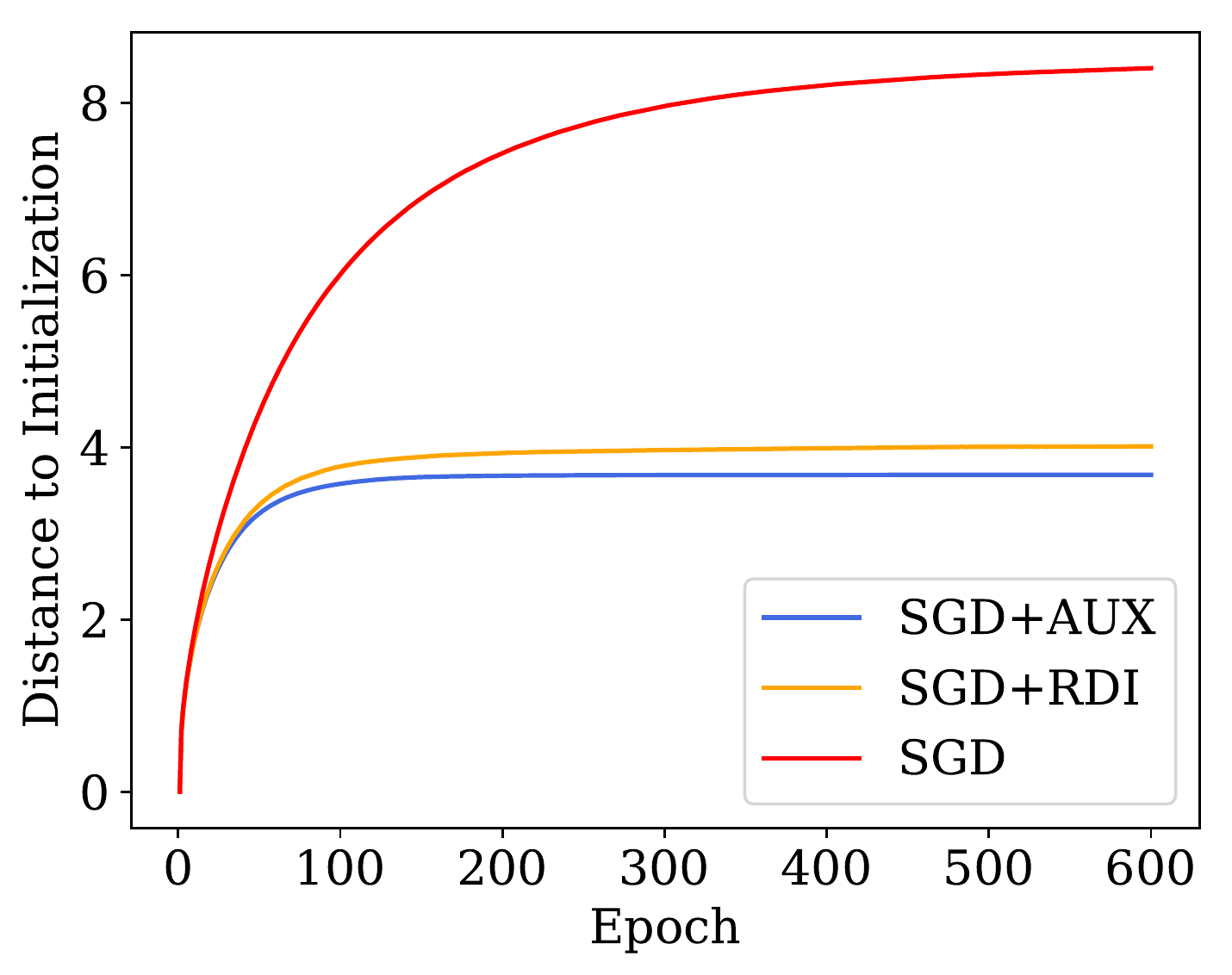}
        \label{fig:p0difference_7}
    \end{subfigure}
    \shrink{-0.2cm}
    \caption{Setting 2, $\|\mW^{(7)}\|_F$ and $\|\mW^{(7)}-\mW^{(7)}(0)\|_F$ during training. Noise rate $=0$, $\lambda = 2$.}
    \label{fig:p0layer_7}
\end{figure}

\end{document}